\theoremstyle{plain}
\newtheorem{theorem}{Theorem}[section]
\newtheorem{proposition}[theorem]{Proposition}
\newtheorem{lemma}[theorem]{Lemma}
\theoremstyle{definition}
\newtheorem{definition}[theorem]{Definition}
\theoremstyle{remark}
\newtheorem{remark}[theorem]{Remark}
\newcommand{\rmmax}{\mathrm {max}}
\newcommand{\rmmin}{\mathrm {min}}
\newcommand{\absl}[1]{\left\lvert{#1}\right\rvert}
\newcommand{\layer}{{(l)}}
\newcommand{\Layer}{{(L)}}
\newcommand{\WUb}[3][]{W^{#1}_{#3} x^{#1}(#2) + U^{#1}_{#3} h^{#1}(#2) +b^{#1}_{#3}}
\newcommand{\WUbone}[3][]{W^{#1}_{#3} x_1^{#1}(#2) + U^{#1}_{#3} h_1^{#1}(#2) +b^{#1}_{#3}}
\newcommand{\WUbtwo}[3][]{W^{#1}_{#3} x_2^{#1}(#2) + U^{#1}_{#3} h_2^{#1}(#2) +b^{#1}_{#3}}
\newcommand{\Enorm}[1]{\left\|{#1}\right\|}
\newcommand{\Mnorm}[1]{\left\|{#1}\right\|_{\infty}}{}
\newcommand{\EMnorm}[1]{\left\|{#1}\right\|_{2,\infty}}
\title{\bf Enhancing AI System Resiliency: Formulation and Guarantee for LSTM Resilience Based on Control Theory}
\date{}
\author[1]{{\bf Sota Yoshihara}\thanks{Equal contribution.}\thanks{Correspondence to sota.yoshihara.e6@math.nagoya-u.ac.jp.}}
\author[$\ast$2]{\bf Ryosuke Yamamoto}
\author[$\ast$1]{\bf Hiroyuki Kusumoto}
\author[1]{\bf Masanari Shimura}
\affil[1]{Graduate School of Mathematics, Nagoya University, Aichi, Japan}
\affil[2]{AISIN SOFTWARE Co., Ltd., Aichi, Japan}
\begin{document}
\maketitle

\begin{abstract}
This paper proposes a novel theoretical framework for guaranteeing and evaluating the resilience of long short-term memory (LSTM) networks in control systems. We introduce {\it recovery time} as a new metric of resilience in order to quantify the time required for an LSTM to return to its normal state after anomalous inputs. By mathematically refining incremental input-to-state stability ($\delta$ISS) theory for LSTM, we derive a practical data-independent upper bound on recovery time. This upper bound gives us resilience-aware training. Experimental validation on simple models demonstrates the effectiveness of our resilience estimation and control methods, enhancing a foundation for rigorous quality assurance in safety-critical AI applications.
\end{abstract}

\section{Introduction}
\label{introduction}
\subsection{Background}
In recent years, artificial intelligence (AI) has become increasingly integrated into various aspects of our daily lives, from healthcare to autonomous vehicles. This widespread adoption has raised concerns regarding AI system reliability and raised the need for robust quality assurance measures. In response to these trends, multiple organizations and regulatory bodies have published guidelines for ensuring AI safety and reliability, such as the European Union's AI Act \citep{European2024} and AI HLEG guidelines \citep{AIHLEG2024}. Although these guidelines have contributed to the gradual establishment of AI quality assurance processes, the development of specific evaluation metrics and implementation methodologies remains in its nascent stages.
The development of these metrics and implementation methodologies is a critical issue, especially for control systems incorporating AI, because such systems interact directly with the physical world, potentially impacting human lives in extreme scenarios. 

In this paper, we focus on the control of time-series models, particularly long short-term memory (LSTM) networks, having real-time capability, which is an important property in control systems.
Although typical time-series models can extract nonlinear features from time-series data, store them as internal states, and utilize them to make inferences at each time step, especially, LSTM surpasses other time-series models in the accuracy of prediction~\citep[see, e.g.,][]{Gers2002,Ma2015,Zhao2017}. 
This ability is well-suited for predicting future outputs of nonlinear systems in model predictive control (MPC), and therefore LSTM's capacity to track target values is better than traditional control methods~\citep[see, e.g.,][]{Wu2019,Chen2020,Kang2023}.
Moreover, LSTM can be implemented on Field Programmable Gate Array (FPGA), which are suitable for machine embedding \citep{Yijin2017}.
However, the state-space nature of time-series models also presents significant risks: If a model receives temporarily anomalous input values, it may retain inaccurate state memories, potentially leading to persistent errors in prediction that could significantly compromise the system's safety and reliability.

The concerns arising from the use of time-series models for control systems can be mitigated by ensuring stability, robustness, and resilience in these models~\citep[see, e.g.,][]{Dawson22a}. Stability refers to consistent performance across similar inputs, which forms the foundation for robustness and resilience. Robustness refers to patience against perturbations, while resilience refers to the ability to return to a normal state after transitioning to an abnormal mode due to anomalous inputs. 
See also Section~\ref{subsec:Related work_Control_Stability} for related work on these concepts.
Time-series models with these properties should exhibit consistent behavior under normal conditions. However, these properties cannot be adequately evaluated using conventional metrics focused on inference accuracy, such as precision, recall, and F1-score~\citep{Sengupta2018}. Therefore, it is necessary to formulate new evaluation metrics and develop techniques to achieve the required metrics. This issue might be addressed by comprehensively collecting and learning from data across various scenarios for evaluation, but there are difficulties in defining comprehensiveness and collecting data at scale.  
Given this context, we 
mathematically formulate the resilience of LSTM, which is different from data-driven approaches.
For theoretical and experimental approaches in assurance of LSTM stability and quality, see also Section~\ref{subsec:Related work_LSTM(theortical)} and Section~\ref{subsec:Related work_LSTM(experimental)}.

\subsection{Objective}

 \begin{figure}[t]
    \centering
    \subfigure{\includegraphics[width=0.4 \textwidth]{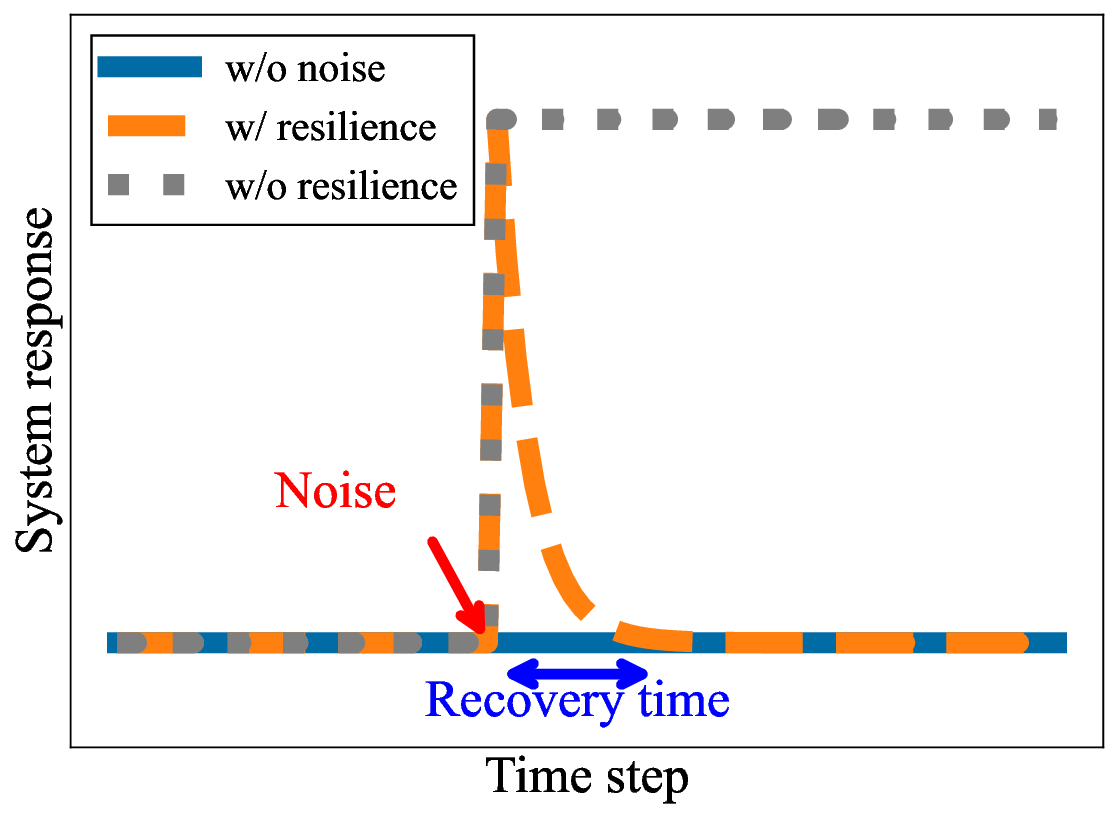}}
    \caption{\textbf{Schematic diagram of recovery time}. The blue line depicts the system's response to normal input without perturbations, while the orange and gray lines illustrate responses of systems with and without resilience to anomalous inputs subjected to instantaneous noise, respectively. We define the \textit{recovery time} as the interval required for a system's response to revert to its nominal behavior following a perturbation induced by noise. A resilient system is a system that has a finite recovery time.} 
     \label{fig:image recovery time}
 \end{figure}

Our research focuses on formulating evaluation metrics and implementation methodologies to assess and ensure the resilience of LSTMs. Specifically, our goals are as follows:
\begin{itemize}
    \setlength{\parskip}{0pt}
    \setlength{\itemsep}{5pt}
    \item To introduce a novel mathematical formulation of the resilience, \textit{recovery time}, defined as the duration required for a system to return to its normal state after transitioning to an abnormal mode due to anomalous inputs (see Figure~\ref{fig:image recovery time}).
    \item To theoretically evaluate the upper bound of recovery time and to develop a practical data-independent method for estimating it.
    \item To present parameter adjustment techniques for controlling the recovery time of the model during the training process.
\end{itemize}

 The key tool to achieve our goals is the incremental input-to-state stability ($\delta$ISS)~\citep{Angeli2002,Bayer2013}, because $\delta$ISS  guarantees the resilience of LSTM. Specifically, even if temporary perturbations in the input sequence disrupt inference, the state variables can converge to the same inference values that would have been obtained without perturbations. Sufficient conditions for the LSTM to exhibit the $\delta$ISS properties have already been derived as inequalities of parameters~\citep{terzi21, Bonassi2023}.
However, it is not clarified in these studies how long it takes for state variables to converge. We will refine and improve upon these previous studies to achieve the above three goals.

\subsection{Contribution}
The main contributions of this study are as follows:

\vspace{-10pt}

\paragraph{Theoretical Contributions}
\begin{itemize}
    \setlength{\parskip}{0pt}
    \setlength{\itemsep}{5pt}
    \item Refinement of an evaluation of LSTM's invariant sets, which form the foundation for stability analysis (Section~\ref{sec:diss of lstm}). 
    \item Improvement of sufficient conditions for LSTM to be $\delta$ISS (Section~\ref{sec:diss of lstm}).
    \item Definition of recovery time that quantifies the resilience of LSTM and derivation of a data-independent upper bound for it (Section~\ref{sec:recovery time}).
\end{itemize}

\paragraph{Practical Contributions}
\begin{itemize}
    \setlength{\parskip}{0pt}
    \setlength{\itemsep}{5pt}
    \item Introduction of a quantitative metric for quality assurance through the recovery time (Section~\ref{sec:proposal}).
    \item Development of a method to control the trade-off between recovery time and inference accuracy for LSTMs (Section~\ref{sec:proposal}).
\end{itemize}

\section{Related Work}

\subsection{Control System and Stability Theory}
\label{subsec:Related work_Control_Stability}

In control theory, stability and resilience are important properties. Furthermore, it is well-established that the stability of controlled objects fundamentally determines the overall stability and resilience of control systems. For instance, systems possessing input-to-state stability (ISS) properties achieve global asymptotic stability under closed-loop control~\citep{jiang2001}. Moreover, systems with $\delta$ISS characteristics acquire robustness when implemented with model predictive control (MPC)~\citep{Bayer2013}. 

Therefore, in control methods such as MPC where controlled objects are modeled as dynamical systems, stability requirements for the dynamical systems are essential. While stability can be comprehensively guaranteed by general theory for linear systems, practical applications require specific and detailed engineering for a nonlinear system. This research focuses on LSTM networks as a specific case of nonlinear systems.

\subsection{Theoretical LSTM Stability Assurance} \label{subsec:Related work_LSTM(theortical)}

Theoretical approaches have investigated sufficient conditions for LSTM networks to satisfy various stability properties. For single-layer LSTMs, conditions for global asymptotic stability~\citep{Deka2019}, ISS~\citep{Bonassi2020}, and $\delta$ISS~\citep{terzi21} have been successively established. Research on sufficient conditions for $\delta$ISS has been extended to multilayer LSTM architectures~\citep{Bonassi2023}.

However, existing studies face a significant limitation: while they can guarantee recovery, they cannot provide estimates of time to recover. Our research addresses these challenges by advancing the stability theory of LSTM to enable both quantitative evaluation and adjustment of LSTM resilience.

\subsection{Experimental LSTM Quality Assurance}\label{subsec:Related work_LSTM(experimental)}

This paper discusses theoretical LSTM quality assurance; however, there are also some experimental methods. For example, ~\citet{wehbe2018} evaluated the robustness of LSTM inference using noise-added datasets. ~\citet{Ahmed2020} proposed creating robust LSTMs by expanding training datasets to include abnormal scenes. \citet{Grande2021} proposed a data-driven stability assurance method.

These data-driven verification methods, like the ones mentioned above, are comprehensive when sufficient data are available, yet face challenges in determining adequate data quantities and provide only sparse guarantees for nonlinear models like LSTMs. Theoretical verification offers limited but exhaustive guarantees within its scope, and is therefore complementary to, rather than in conflict with, a data-driven approach.

\section{Preliminary}
In this section, we introduce essential mathematical preliminaries, an important stability concept: $\delta$ISS, and the LSTM.

\subsection{Notation}
For a vector $v \in \mathbb{R}^n$, $\|v\|=\|v\|_2$ represents the Euclidean norm of $v$, and $\|v\|_{\infty}=\max_{i \in [n]} |v_{(i)}|$ represents the maximum norm, where $v_{(i)}$ is the $i$-th component of $v$ and $[n]=\{1,2,\ldots,n\}$. 
For a discrete-time sequence of vector $v(k), k \in \mathbb{Z}_{\geq 0}$, we denote $v(k_1 : k_2)=[v(k_1),\ldots, v(k_2)]$, and $\|v(k_1:k_2)\|_{2,\infty}=\|(\|v(k_1)\|_2,\ldots,\|v(k_2)\|_2)^\top\|_\infty$. 

For a matrix $A \in \mathbb{R}^{m \times n}$, $\|A\|=\|A\|_2$ represents the induced 2-norm of $A$, and $\|A\|_{\infty}$ does the induced $\infty$-norm, that is, $\|A\|_{\infty}=\max_i \sum_j A_{(i,j)}$. In addition, $\absl{A}$ denotes a matrix whose components are the absolute values of $A$, that is, $\absl{A} \coloneqq (|A_{(i,j)}|)$. For a matrix $A \in \mathbb{R}^{n \times n}$, $\rho(A)$ denotes the spectral radius of $A$, that is, $\rho(A) \coloneqq \max_{i \in [n]} |\lambda_i|$, where $\lambda_{i}$ are the eigenvalues of $A$.

We denote the sigmoid function as $\sigma(w)=\frac{1}{1+e^{-w}}$ and the hyperbolic tangent as $\phi(w)=\tanh{w}=\frac{e^w-e^{-w}}{e^w+e^{-w}}$.

\subsection{Stability}
\begin{definition}[Class $\mathcal{K}$, $\mathcal{KL}$, and $\mathcal{K}_\infty$]
A function $\alpha:\mathbb{R}_{\ge 0}\to \mathbb{R}_{\ge 0}$ is called $\mathcal{K}$-function if it is continuous, strictly increasing and $\alpha(0)=0$.
A function $\beta:\mathbb{R}_{\ge 0}\times \mathbb{R}_{\ge 0}\to\mathbb{R}_{\ge 0}$ is called $\mathcal{KL}$-function if $\beta(\cdot,t)$ is $\mathcal{K}$-function for each $t\ge 0$, and $\beta(s,\cdot)$ is decreasing and $\beta(s,t)\to 0$ as $t\to \infty$ for each $s\ge 0$.
A function $\gamma:\mathbb{R}_{\ge 0}\to \mathbb{R}_{\ge 0}$ is called $\mathcal{K}_\infty$-function if it is $\mathcal{K}$-function and $\lim_{t \to \infty}\gamma(t)=\infty$.

\end{definition}
Now we consider a (autonomous) discrete-time nonlinear system 
\begin{equation} \label{NS}
    s(t+1)=f(s(t),x(t))
\end{equation}
with $f(0,0)=0$, for initial state $s(0)$ and input $x(t), t \geq 0$.
In the following, we denote $s_i(t)$ as the system whose initial state is $s_i(0)$ and input is $x_i(t)$. 
\begin{definition}[$\delta$ISS, \citealt{Angeli2002,Bayer2013}] \label{def:deltaISS}
System \eqref{NS} is called incrementally input-to-state stable ($\delta$ISS) if there exist continuous functions $\beta \in \mathcal{KL}$ and $\gamma \in \mathcal{K}_{\infty}$ such that, for any initial states $s_1(0),s_2(0) \in \mathbb{R}^{n_s}$ and for any input sequence $x_1(0:t)$, $x_2(0:t) \in [-x_\rmmax,x_\rmmax]^{n_x \times (t+1)}$, $s_1(t)$ and $s_2(t)$ satisfy
\begin{align}\label{def-delta-ISS}
    \|s_1(t)-s_2(t)\| 
    \leq \, &\beta(\|s_1(0)-s_2(0)\|,t)\notag \\
    &+\gamma(\|x_1(0:t)-x_2(0:t)\|_{2,\infty})
\end{align}
for all $t \in \mathbb{Z}_{\geq 0}$.
\end{definition}

\subsection{LSTM Network}

The LSTM is described by the following relations:
\begin{subequations}
\begin{align}
    c^{(l)}(t+1)
    &=\sigma(W^{(l)}_f x^{(l)}(t) + U^{(l)}_f h^{(l)}(t) + b^{(l)}_f) \odot c^{(l)}(t) \notag\\
    &\quad + \sigma(W^{(l)}_i x^{(l)}(t) + U^{(l)}_i h^{(l)}(t) + b^{(l)}_i) \notag\\
    &\qquad \odot \phi(W^{(l)}_c x^{(l)}(t) + U^{(l)}_c h^{(l)}(t) + b^{(l)}_c), \\\label{c^l-LSTM}\\
    h^{(l)}(t+1)
    &=
    \sigma(W^{(l)}_o x^{(l)}(t) + U^{(l)}_o h^{(l)}(t) + b^{(l)}_o)\notag \\
    &\quad \odot \phi(c^{(l)}(t+1)), \label{h^l-LSTM}\\
    y(t)&= U_y h^{(L)}(t) + b_y, \label{y^L-LSTM}\noeqref{y^L-LSTM}\\
    x^{(l)}(t)&= \begin{cases} x  & \text{for } l=1,\\h^{(l-1)}(t+1) & \text{for } l=2,\ldots L,\end{cases} \label{x=h-LSTM}
\end{align}
\end{subequations}
where $x \in [-x_\rmmax,x_\rmmax]^{n_x}$, $c^{(l)} \in \mathbb{R}^{n_c}$, $h^{(l)} \in (-1,1)^{n_c}$, $y \in \mathbb{R}^{n_y}$ are the input, the cell state, the hidden state, and the output, respectively, for each time $t \in \mathbb{Z}_{\geq 0}$, each layer $l \in [L]$, and $x_\rmmax>0$. Moreover, $W^{(l)}_{\ast} \in \mathbb{R}^{n_c \times n_x}, U^{(l)}_{\ast} \in \mathbb{R}^{n_c \times n_c}, U_y \in \mathbb{R}^{n_y \times n_c}$ are the weight matrices, $b^{(l)}_{\ast} \in \mathbb{R}^{n_c}$ and $b_y \in \mathbb{R}^{n_y}$ are the biases, where $\ast$ denotes $f, i, c$, or $o$.

In this paper, we regard the LSTM as a discrete-time nonlinear system \eqref{NS}. We define the state $s(t)$ and the input $x(t)$ of~\eqref{NS} as $s^{(1:L)}(t)=[s^{(1)}(t),\dots, s^{(L)}(t)]$ and $(x^\layer(t)^\top,b_c^{\layer\top})^\top$, respectively, where $s^\layer(t)=(c^\layer(t)^\top, h^\layer(t)^\top)^\top$. 

\section{$\delta$ISS of LSTM}\label{sec:diss of lstm}
In this section, we relax the existing $\delta$ISS condition of the LSTM~\citep{terzi21} by refining the invariant set of the LSTM dynamics.
This relaxation contributes to an accurate evaluation of the recovery time introduced in Section~\ref{sec:recovery time}.

\subsection{Invariant Set in LSTM} \label{sec:invariant}

Define $G^\layer_*, G^\layer_c:\mathbb{R}_{\ge 0}\to \mathbb{R}_{\ge 0}$ ($*$ denotes $f, i$, or $o$) by 
\begin{align}
G^\layer_*(\eta)&= \Mnorm{\left(x_\rmmax |W^\layer_*| \bm{1}_{n_x}+ \eta |U^\layer_*| \bm{1}_{n_c} + b^\layer_*\right)_+}, \\
G^\layer_c(\eta)&=\Mnorm{x_\rmmax |W^\layer_c| \bm{1}_{n_x}+ \eta |U^\layer_c| \bm{1}_{n_c} + |b^\layer_c|},
\end{align}
where $\bm{1}_n$ is a $n$-dimensional vector of all components being one, and $v_+$ denotes a vector $(\max\{v_{(i)},0\})_{i \in [n_c]}$ for a vector $v$. Note that $x_\rmmax=1$ if $n\ge 2$.
In addition, we define the sequences $\{\overline{\sigma}^\layer_*(k)\}_{k=0}^{\infty}$, $\{\overline{\phi}^\layer_c(k)\}_{k=0}^{\infty}$, $\{\eta^\layer(k)\}_{k=-1}^{\infty}$, and $\{\overline{c}^\layer(k)\}_{k=0}^{\infty}$ using following recursive formulas:
\begin{align}
\overline{\sigma}^\layer_*(k)&=\sigma(G^\layer_*(\eta^\layer(k-1))), \label{DEF-sigma}\\
\overline{\phi}^\layer_c(k)&=\phi(G^\layer_c(\eta^\layer(k-1))),\label{DEF-phi}\\
\eta^\layer(k)&=\phi(\overline{c}^\layer(k)) \overline{\sigma}^\layer_o(k),\label{DEF-M} \\
\overline{c}^\layer(k) &=\frac{\overline{\sigma}^\layer_i(k)  \overline{\phi}^\layer_c(k)}{1-\overline{\sigma}^\layer_f(k)},
\end{align}
for $k \in \mathbb{Z}_{\geq 0}$, with initial term $\eta^\layer(-1)=1$, where $\ast$ denotes $f,i$ or $o$.
Moreover, we define that
\begin{align}
\mathcal{C}^\layer(k) &\coloneqq \left\{c^\layer \in \mathbb{R}^{n_c}:\|c^\layer\|_{\infty} \leq \overline{c}^\layer(k) \right\},\\
\mathcal{H}^\layer(k) &\coloneqq \left\{h^\layer \in \mathbb{R}^{n_c}:\|h^\layer\|_{\infty} \leq \phi(\overline{c}^\layer(k))\overline{\sigma}^\layer_o(k) \right\}.
\end{align}

\begin{proposition}[Invariant Set] \label{prop:invariant}
Let $L\in\mathbb{N}$ and $k \in \mathbb{Z}_{\geq 0}$. $\mathcal{S}^\layer(k)=\mathcal{C}^\layer(k) \times \mathcal{H}^\layer(k)$ is an invariant set, that is, for all $t \in \mathbb{Z}_{\ge 0}$, $l\in[L]$ and $x \in [-x_\rmmax,x_\rmmax]^{n_x}$, if $s^\layer(0)=(c^\layer(0),  h^\layer(0))\in\mathcal{S}^\layer(k)$, then $ s^\layer(t)=(c^\layer(t), h^\layer(t)) \in \mathcal{S}^\layer(k)$. Furthermore, $\{\mathcal{S}^\layer(k)\}_{k=0}^\infty$ is a decreasing sequence in the sense of set inclusion.
\end{proposition}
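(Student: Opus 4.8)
The plan is to prove both claims together, since they are intertwined: the set $\mathcal{S}^\layer(k)$ constrains $h^\layer$ by the bound $\eta^\layer(k)$, whereas the one-step gate estimates that certify invariance are the coefficients $\overline{\sigma}^\layer_*(k)$ built from the index $\eta^\layer(k-1)$, so the invariance cannot be closed without first knowing that $\{\eta^\layer(k)\}$ is non-increasing. I would first record the elementary per-step gate bounds. Fix a layer $l$ and suppose $\Mnorm{h^\layer(t)} \le \eta$ and $\Mnorm{x^\layer(t)} \le x_\rmmax$ (for $l \ge 2$ the structural fact $h^{(l-1)}(t+1) = \sigma(\cdot)\odot\phi(\cdot) \in (-1,1)^{n_c}$ forces the effective input bound $x_\rmmax = 1$). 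For each component, the affine pre-activation of a gate $\ast\in\{f,i,o\}$ is bounded above by $(x_\rmmax \absl{W^\layer_\ast}\bm{1}_{n_x} + \eta \absl{U^\layer_\ast}\bm{1}_{n_c} + b^\layer_\ast)_{(i)}$; since $\sigma$ is increasing and replacing the componentwise maximum by $\Mnorm{(\cdot)_+}$ only inflates the bound, every gate entry is at most $\sigma(G^\layer_\ast(\eta))$. Using that $\phi$ is odd and increasing, each candidate entry obeys $\absl{\phi(\cdot)_{(i)}} \le \phi(G^\layer_c(\eta))$. These are exactly $\overline{\sigma}^\layer_\ast(k)$ and $\overline{\phi}^\layer_c(k)$ when $\eta = \eta^\layer(k-1)$.

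Next I would establish the monotonicity. Writing $\eta^\layer(k) = F(\eta^\layer(k-1))$ for the scalar map $F$ obtained by composing the definitions of $\overline{\sigma}^\layer_\ast$, $\overline{\phi}^\layer_c$, $\overline{c}^\layer$ and $\eta^\layer$, I observe that each $G^\layer_\ast$ is increasing in $\eta$ (the coefficient of $\eta$ is $\absl{U^\layer_\ast}\bm{1}_{n_c}\ge 0$), that $\sigma,\phi$ are increasing and positive on the relevant range, and that $\eta \mapsto 1/(1-\sigma(G^\layer_f(\eta)))$ is increasing; hence $F$ is a composition of increasing positive maps and is itself increasing. Because $\phi<1$ and $\sigma<1$, one has $F(\eta) < 1$ for every $\eta$, so $\eta^\layer(0) = F(1) < 1 = \eta^\layer(-1)$. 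Monotonicity of $F$ then propagates this by induction, $\eta^\layer(k+1) = F(\eta^\layer(k)) \le F(\eta^\layer(k-1)) = \eta^\layer(k)$, so $\{\eta^\layer(k)\}$ is non-increasing. Since $\overline{c}^\layer(k)$ is the same increasing expression of $\eta^\layer(k-1)$, the inequality $\eta^\layer(k)\le\eta^\layer(k-1)$ also gives $\overline{c}^\layer(k+1) \le \overline{c}^\layer(k)$; together these yield $\mathcal{C}^\layer(k+1)\subseteq\mathcal{C}^\layer(k)$ and $\mathcal{H}^\layer(k+1)\subseteq\mathcal{H}^\layer(k)$, hence $\mathcal{S}^\layer(k+1) \subseteq \mathcal{S}^\layer(k)$, which is the decreasing-sequence claim.

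For invariance I would induct on $t$ with the hypothesis $s^\layer(t) \in \mathcal{S}^\layer(k)$, i.e. $\Mnorm{c^\layer(t)} \le \overline{c}^\layer(k)$ and $\Mnorm{h^\layer(t)} \le \eta^\layer(k)$. The crucial reconciliation is that the gate bounds of the first paragraph, applied with $\eta = \eta^\layer(k)$, are dominated by the tabulated values $\overline{\sigma}^\layer_\ast(k) = \sigma(G^\layer_\ast(\eta^\layer(k-1)))$ precisely because $\eta^\layer(k) \le \eta^\layer(k-1)$ and $G^\layer_\ast,\sigma$ are increasing; this is exactly where the previous step is used. Bounding the cell recurrence componentwise gives $\Mnorm{c^\layer(t+1)} \le \overline{\sigma}^\layer_f(k)\,\overline{c}^\layer(k) + \overline{\sigma}^\layer_i(k)\,\overline{\phi}^\layer_c(k)$, and the definition $\overline{c}^\layer(k) = \overline{\sigma}^\layer_i(k)\overline{\phi}^\layer_c(k)/(1-\overline{\sigma}^\layer_f(k))$ is exactly the fixed point of the affine map $u \mapsto \overline{\sigma}^\layer_f(k)\,u + \overline{\sigma}^\layer_i(k)\overline{\phi}^\layer_c(k)$, so the right-hand side equals $\overline{c}^\layer(k)$ and $c^\layer(t+1) \in \mathcal{C}^\layer(k)$. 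Feeding this back through the hidden-state relation, $\Mnorm{h^\layer(t+1)} \le \overline{\sigma}^\layer_o(k)\,\phi(\Mnorm{c^\layer(t+1)}) \le \overline{\sigma}^\layer_o(k)\,\phi(\overline{c}^\layer(k)) = \eta^\layer(k)$ by monotonicity of $\phi$, so $h^\layer(t+1) \in \mathcal{H}^\layer(k)$ and thus $s^\layer(t+1) \in \mathcal{S}^\layer(k)$, completing the induction.

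The main obstacle I anticipate is not any single estimate but keeping the index bookkeeping honest: the membership $h^\layer \in \mathcal{H}^\layer(k)$ supplies the bound $\eta^\layer(k)$, while the coefficients certifying the next step are $\overline{\sigma}^\layer_\ast(k)$ built from $\eta^\layer(k-1)$, so the two parts must be sequenced so that the monotonicity $\eta^\layer(k)\le\eta^\layer(k-1)$ is in hand before the invariance induction. A secondary care point is the multilayer coupling, but no joint induction across layers is needed once the structural bound $\Mnorm{h^{(l-1)}(t+1)} < 1$ is used to justify $x_\rmmax = 1$ for $l \ge 2$, making the layerwise argument self-contained.
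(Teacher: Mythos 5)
Your proposal is correct and follows essentially the same route as the paper: it first establishes that $\{\eta^\layer(k)\}$ is non-increasing by an induction seeded at $\eta^\layer(0)<1=\eta^\layer(-1)$ and propagated through the monotone maps $G^\layer_\ast$, $\sigma$, $\phi$ (the paper isolates this as a separate lemma), and then proves invariance by induction on $t$ using exactly the same gate bounds, the inequality $\eta^\layer(k)\le\eta^\layer(k-1)$ to match the tabulated coefficients, and the fixed-point identity $\overline{\sigma}^\layer_f(k)\,\overline{c}^\layer(k)+\overline{\sigma}^\layer_i(k)\,\overline{\phi}^\layer_c(k)=\overline{c}^\layer(k)$. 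Your packaging of the recursion as a single increasing scalar map $F$ is a mild streamlining of the paper's componentwise induction, but not a different argument.
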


\begin{proof}[Proof sketch]
    Since the activation functions are strictly monotonically increasing and bounded, the sequences are monotonically decreasing and converge to some limit. This proposition is proved by induction on $k$.
\end{proof}

The exact proof is described in Appendix~\ref{Proof_in_3.1}. We note that the invariant set $\mathcal{S}^\layer(0)$ is considered in the literature~\citep{terzi21, Bonassi2023}.

\subsection{$\delta$ISS of LSTM}
Using Proposition~\ref{prop:invariant}, we provide the following sufficient condition of $\delta$ISS:

\begin{theorem}[$\delta$ISS of LSTM]\label{thm:deltaISS}
Let $L \in \mathbb{N}$ and $k \in \mathbb{Z}_{\geq 0}$. We assume that $s^\layer(0)\in\mathcal{S}^\layer(k)$ for all $l \in [L]$. The LSTM is $\delta$ISS if $\rho(A^{\layer}_{s}(k)) <1$ for all $l \in [L]$, where
\begin{align}
&A_s^\layer(k)\\
&=\begin{bmatrix}
\overline{\sigma}_f^\layer(k) & \alpha_s^\layer(k)\\
\overline{\sigma}_o^\layer(k)\overline{\sigma}_f^\layer(k) & \alpha_s^\layer(k)\overline{\sigma}_o^\layer(k)+\dfrac{1}{4}\phi(\overline{c}^\layer(k))\|U_o^\layer\|
\end{bmatrix},\\
&\alpha_s^\layer(k)
=\frac{1}{4}\|U_f^\layer\|\overline{c}^\layer(k)+\overline{\sigma}_i^\layer(k)\|U_c^\layer\|+\frac{1}{4}\|U_i^\layer\|\overline{\phi}_c^\layer(k).
\end{align}
Furthermore, $\rho(A^{\layer}_{s}(k))$ is monotonically decreasing with respect to $k \in \mathbb{Z}_{\geq 0}$.
\end{theorem}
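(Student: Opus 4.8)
The plan is to reduce the $\delta$ISS inequality \eqref{def-delta-ISS} to a componentwise linear comparison recursion whose system matrix is exactly $A_s^\layer(k)$, and then to exploit $\rho(A_s^\layer(k))<1$ together with a layer-by-layer cascade. Fix a layer $l$ and two trajectories $s_1^\layer,s_2^\layer$ with initial states in $\mathcal{S}^\layer(k)$; by Proposition~\ref{prop:invariant} both remain in $\mathcal{S}^\layer(k)$ for all $t$, so every gate activation and cell value obeys the bounds $\overline{\sigma}_*^\layer(k)$, $\overline{\phi}_c^\layer(k)$, $\overline{c}^\layer(k)$. Writing $\Delta c^\layer(t)=c_1^\layer(t)-c_2^\layer(t)$ and $\Delta h^\layer(t)=h_1^\layer(t)-h_2^\layer(t)$, the target is the entrywise inequality
\begin{equation*}
\begin{bmatrix}\Mnorm{\Delta c^\layer(t+1)}\\ \Mnorm{\Delta h^\layer(t+1)}\end{bmatrix}
\le A_s^\layer(k)\begin{bmatrix}\Mnorm{\Delta c^\layer(t)}\\ \Mnorm{\Delta h^\layer(t)}\end{bmatrix}+d^\layer(t),
\end{equation*}
where $d^\layer(t)\ge 0$ is proportional to $\Mnorm{\Delta x^\layer(t)}$ through the input weights $\Enorm{W_*^\layer}$.

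To produce the matrix entries I would split each nonlinear product by adding and subtracting a mixed term, e.g.\ $\sigma_f(\cdot_1)\odot c_1-\sigma_f(\cdot_2)\odot c_2=\sigma_f(\cdot_1)\odot\Delta c^\layer+(\sigma_f(\cdot_1)-\sigma_f(\cdot_2))\odot c_2$, and bound each piece using (i) the gate bound $\overline{\sigma}_f^\layer(k)$ valid on the invariant set, (ii) the $\tfrac14$-Lipschitz constant of $\sigma$ and the $1$-Lipschitz constant of $\phi$, and (iii) the state bound $\overline{c}^\layer(k)$. The three contributions to $\Mnorm{\Delta c^\layer(t+1)}$ coming from $\Mnorm{\Delta h^\layer(t)}$ — namely $\tfrac14\Enorm{U_f^\layer}\overline{c}^\layer(k)$, $\overline{\sigma}_i^\layer(k)\Enorm{U_c^\layer}$, and $\tfrac14\Enorm{U_i^\layer}\overline{\phi}_c^\layer(k)$ — assemble precisely into $\alpha_s^\layer(k)$, while the forget term gives the coefficient $\overline{\sigma}_f^\layer(k)$ on $\Mnorm{\Delta c^\layer(t)}$. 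For the hidden state I would expand $h^\layer(t+1)=\sigma_o\odot\phi(c^\layer(t+1))$ analogously: the first piece contributes $\overline{\sigma}_o^\layer(k)\Mnorm{\Delta c^\layer(t+1)}$, into which I substitute the cell recursion to obtain the second row of $A_s^\layer(k)$, and the second piece contributes $\tfrac14\phi(\overline{c}^\layer(k))\Enorm{U_o^\layer}$ on $\Mnorm{\Delta h^\layer(t)}$.

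Iterating the comparison recursion gives $v^\layer(t)\le (A_s^\layer(k))^{t}v^\layer(0)+\sum_{j=0}^{t-1}(A_s^\layer(k))^{t-1-j}d^\layer(j)$ with $v^\layer=(\Mnorm{\Delta c^\layer},\Mnorm{\Delta h^\layer})^\top$. Since $\rho(A_s^\layer(k))<1$, Gelfand's formula makes $\Enorm{(A_s^\layer(k))^{t}}$ decay geometrically, so the first term supplies the $\mathcal{KL}$-function $\beta$ (via $\Enorm{s_1^\layer(0)-s_2^\layer(0)}$) and the convergent Neumann series $\sum_j (A_s^\layer(k))^{j}$ bounds the input term by a linear, hence $\mathcal{K}_\infty$, function of $\EMnorm{\Delta x^\layer(0:t)}$, yielding the single-layer form \eqref{def-delta-ISS}. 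For $L\ge 2$ I would close the argument by a finite cascade: the input of layer $l$ is $h^{(l-1)}(t+1)$, so $\Mnorm{\Delta x^\layer(t)}$ is controlled by the layer-$(l-1)$ estimate, and composing the $\mathcal{KL}$ and $\mathcal{K}_\infty$ bounds across $l=1,\dots,L$ (a standard series-interconnection step, finite here) gives $\delta$ISS for the full state $s^{(1:L)}$.

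For the monotonicity claim I would argue structurally. Every entry of $A_s^\layer(k)$ is a nonnegative combination of $\overline{\sigma}_*^\layer(k)$, $\overline{\phi}_c^\layer(k)$, $\overline{c}^\layer(k)$, and $\phi(\overline{c}^\layer(k))$, so $A_s^\layer(k)\ge 0$ entrywise; since Proposition~\ref{prop:invariant} makes these sequences decreasing in $k$ and both $\alpha_s^\layer(k)$ and the entry products preserve this, $0\le A_s^\layer(k+1)\le A_s^\layer(k)$ entrywise, whence the Perron--Frobenius monotonicity ($0\le B\le A\Rightarrow\rho(B)\le\rho(A)$ for nonnegative matrices) gives $\rho(A_s^\layer(k+1))\le\rho(A_s^\layer(k))$. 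The main obstacle is instead the bookkeeping of the first part: verifying that the add-and-subtract decompositions, the $\tfrac14$- and $1$-Lipschitz constants, and the invariant-set bounds reproduce \emph{exactly} the entries of $A_s^\layer(k)$ — especially that substituting the cell recursion into the hidden-state bound produces the $(2,1)$ and $(2,2)$ entries without slack — and then handling the time shift $t\mapsto t+1$ cleanly in the multilayer cascade. By comparison the monotonicity assertion is routine once nonnegativity and entrywise monotonicity are noted.
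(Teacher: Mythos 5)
Your proposal follows essentially the same route as the paper's proof: add-and-subtract decompositions of the gated products, the $\tfrac14$- and $1$-Lipschitz constants of $\sigma$ and $\phi$, the invariant-set bounds from Proposition~\ref{prop:invariant} to produce the two-dimensional comparison recursion with system matrix $A_s^\layer(k)$, iteration plus a finite cascade through \eqref{x=h-LSTM} for the multilayer case, and entrywise monotonicity $0\le A_s^\layer(k+1)\le A_s^\layer(k)$ combined with Perron--Frobenius for the spectral-radius claim (the paper re-derives that monotonicity fact via a Neumann-series contradiction rather than citing it, but the substance is identical). One small correction: the comparison recursion must be run in the Euclidean norm $\Enorm{\Delta c^\layer}$, $\Enorm{\Delta h^\layer}$ rather than the max norm $\Mnorm{\cdot}$ you wrote, since the entries of $A_s^\layer(k)$ contain the induced $2$-norms $\Enorm{U_*^\layer}$ and the bound $\Enorm{U_*^\layer \Delta h^\layer}\le\Enorm{U_*^\layer}\Enorm{\Delta h^\layer}$ is what makes the coefficients ``assemble precisely'' as you claim; with $\Mnorm{\cdot}$ you would instead pick up $\|U_*^\layer\|_\infty$ and the stated matrix would not be reproduced exactly.
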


\begin{proof}[Proof sketch]
The matrix $A_s^\layer(k)$ is obtained by the invariant set in Proposition~\ref {prop:invariant}, Lipschitz continuity of the activation functions, triangle inequality, and norm properties. The monotonic decrease of $\rho(A^{\layer}_{s}(k))$ is demonstrated using the Perron–Frobenius theorem.
\end{proof}

The rigorous proof is described in Appendix~\ref{appendix:stability}, and where we also see that a similar relaxation can be obtained for ISS.
\begin{remark}
We give some important observations to Theorem~\ref{thm:deltaISS}:
\begin{enumerate}
\renewcommand{\labelenumi}{\textup{(\roman{enumi})}} 
\item Theorem~\ref{thm:deltaISS} coincides with existing results~\citep{terzi21, Bonassi2023} for the case $k=0$. Monotonically decreasing of $\rho(A^\layer_s(k))$ makes it possible to relax the constraints on the model in the case of $k > 0$. 
\item Because $\mathcal{S}^\layer(k)$ is a contracting sequence, the assumption for the initial state becomes stronger. However, since the initial states of LSTMs are usually set to zero in practice, we believe that there is no practical limitation.
\end{enumerate}
\end{remark}

\section{Recovery Time}\label{sec:recovery time}
In this section, we define a novel mathematical formulation, the recovery time of LSTM, which accurately expresses the time required for the system to recover from disturbances and return to its normal state.
This definition makes it difficult to evaluate in practical applications due to data dependence, but we overcome this limitation by deriving a practical and computable upper bound for the recovery time. This upper bound constitutes the primary novel contribution of our research, offering a fresh perspective on LSTM performance evaluation.

\subsection{Definition} \label{subsec: def recovery time}
\begin{definition}[Recovery Time]\label{def:recovery time}
Let inputs $x(t), \widehat{x}(t) \in  [-x_\rmmax, x_\rmmax]^{n_x}$ for $t \in \mathbb{Z}_{\geq 0}$.
Assume that there exists $t_0>0$ such that $x(t)=\widehat{x}(t)$ for any $t \geq t_0$.
We define \textit{recovery time} $T_R \geq 0$ for given $e \geq 0$ as
\begin{align}
    &T_R(x,\widehat{x},s(0);e)\\
    &=\min\{t \ge t_0:\|y(t', s(0), x(0:t'))\\
    &\hspace{40pt} - y(t', s(0), \widehat{x}(0:t'))\| \leq e \ \text{for any} \ t' \geq t \}-t_0,
\end{align}
and as $T_R=\infty$ if the set on the right-hand side is empty.
\end{definition}
Evaluating the recovery time $T_R$ is difficult because it depends on the input data, then we utilize the function $\beta$ of $\delta$ISS in Definition~\ref{def:deltaISS}.

\subsection{Upper Bound of Recovery Time}\label{subsec: upper bound}

Theorem~\ref{thm:deltaISS} indicates that there exists a $\mathcal{KL}$ function $\beta$ satisfying
\vspace{-10pt}
\begin{align}
    &\| s(t, s_1(0), x(0:t))- s(t, s_2(0), x(0:t)) \| \\
    &\le \beta(\|s_1(0) - s_2(0) \|,t) \label{beta}
\end{align}
for any initial states $s_1(0), s_2(0)\in\prod^L_{l=1}\mathcal{S}^\layer(k)$ and input sequence $x(0:t) \in [-x_\rmmax,x_\rmmax]^{n_x \times (t+1)}$ of the LSTM and $t>0$.  
The proof of Theorem~\ref{thm:deltaISS} also leads to a more precise and explicit evaluation of~\eqref{beta}:

\begin{proposition}[Explicit Representation of $\delta$ISS]\label{cor:deltaISS}
Let
\begin{align}
    &\tilde{\beta}(s^{(1:L)},t;k)\\
    &= \mu^\Layer(t) \rho(A_s^{(L)}(k))^t s^{(L)} \\
    &\quad+\sum^{L-1}_{l=1} \mu^{(l:L)}(t) \frac{(t+L-l)!}{t!(L-l)!} \\
    &\qquad \times \max_{i = l,\cdots,L} \rho(A_s^{(i)}(k))^t \left( \prod^L_{i=l+1} \|a_x^{(i)}(k)\| \right) s^{(l)}.
\label{beta-multi-layer-LSTM}\end{align}
Then, 
\begin{align}
    &\| s^{(L)}(t, s_1(0), x(0:t))- s^{(L)}(t, s_2(0), x(0:t)) \|  \\
    &\le \tilde\beta(\|s^{(1)}_1(0) - s^{(1)}_2(0) \|,\dots,\|s^{(L)}_1(0) - s^{(L)}_2(0) \|,t;k)
\end{align}
satisfies for any initial states $s_1(0), s_2(0)\in\prod^L_{l=1}\mathcal{S}^\layer(k)$ and 
input sequence $x(0:t) \in [-x_\rmmax,x_\rmmax]^{n_x \times (t+1)}$ of the LSTM and $t>0$.
Here,
\begin{align}
    a_x^\layer(k) &= \begin{pmatrix}
    \alpha_x^\layer(k) \\
    \alpha_x^\layer(k) \bar{\sigma}^\layer_o(k) + \frac{1}{4}\phi(\overline{c}^\layer(k)) \|W^\layer_o\|
    \end{pmatrix},\\
    \alpha_x^\layer(k)&=\frac{1}{4}\|W_f^\layer\|\overline{c}^\layer(k)+\overline{\sigma}_i^\layer(k)\|W_c^\layer\|\\
    &\quad +\frac{1}{4}\|W_i^\layer\|\overline{\phi}_c^\layer(k),\\
    \mu^{(l:L)}(t) &= \prod^{L}_{i=l}\mu^{(i)}(t), \\
    \mu^\layer(t) &= \sqrt{(1+(r^\layer)^{2t}) +\frac{|\nu^\layer|^2}{|\lambda^\layer_1|^2} {\left( \frac{1-(r^\layer)^t}{1-r^\layer} \right)}^2},
\end{align}
for any decomposition such as
    \begin{align}
    A^\layer_s(k) &= U \begin{bmatrix}
        \lambda^\layer_1 & \nu^\layer \\
        0 & \lambda^\layer_2
    \end{bmatrix} U^*, \lambda^\layer_1\ge\lambda^\layer_2,\\
    r^\layer &:= |\lambda^\layer_2|/|\lambda^\layer_1| \in [0,1],
\end{align}
where $U$ is a suitable unitary matrix (note that any matrix is triangulizable by some regular matrix $U$). 
Also, we regard the term of $\sum_{l=1}^0$ as $0$.
\end{proposition}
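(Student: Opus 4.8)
The plan is to reduce the multilayer estimate to a cascade of scalar affine recursions---one per layer---and then to unroll that cascade by induction on the layer index. First I would isolate the \emph{entrywise} contraction that already underlies the proof of Theorem~\ref{thm:deltaISS}. For two trajectories with the same input and initial states in $\mathcal{S}^\layer(k)$, set $\delta s^\layer(t)=\bigl(\|c_1^\layer(t)-c_2^\layer(t)\|,\ \|h_1^\layer(t)-h_2^\layer(t)\|\bigr)^\top$. The Lipschitz bounds on $\sigma$ and $\phi$, the invariant-set bounds of Proposition~\ref{prop:invariant}, and the triangle inequality give, componentwise and with entrywise-nonnegative matrices,
\[
\delta s^\layer(t+1)\ \preceq\ A_s^\layer(k)\,\delta s^\layer(t)\ +\ a_x^\layer(k)\,\|x_1^\layer(t)-x_2^\layer(t)\|.
\]
For $l=1$ the inputs coincide and the forcing term drops; for $l\ge 2$ the ``input'' is the hidden state of the layer below, so $\|x_1^\layer(t)-x_2^\layer(t)\|=\|h_1^{(l-1)}(t+1)-h_2^{(l-1)}(t+1)\|\le\|\delta s^{(l-1)}(t+1)\|$. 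This coupling is what turns the stack into a one-directional cascade.

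Second, I would solve each affine recursion by discrete variation of constants,
\[
\delta s^\layer(t)\ \preceq\ A_s^\layer(k)^t\,\delta s^\layer(0)\ +\ \sum_{\tau=0}^{t-1}A_s^\layer(k)^{t-1-\tau}\,a_x^\layer(k)\,\|\delta s^{(l-1)}(\tau+1)\|,
\]
and extract the scalar rate by bounding the matrix powers. Triangularizing $A_s^\layer(k)=U T U^*$ (Schur, $U$ unitary) with $T$ upper triangular of diagonal $\lambda_1^\layer,\lambda_2^\layer$ and superdiagonal $\nu^\layer$, the $t$-th power has superdiagonal entry $\nu^\layer(\lambda_1^t-\lambda_2^t)/(\lambda_1-\lambda_2)$, whose modulus is at most $|\lambda_1^\layer|^{t-1}\sum_{m=0}^{t-1}(r^\layer)^m=|\lambda_1^\layer|^{t-1}\frac{1-(r^\layer)^t}{1-r^\layer}$ (read as $t\,|\lambda_1^\layer|^{t-1}$ when $\lambda_1^\layer=\lambda_2^\layer$). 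Unitary invariance of the Frobenius norm then yields $\|A_s^\layer(k)^t\|\le\|T^t\|_F\le\mu^\layer(t)\,\rho(A_s^\layer(k))^t$, which is exactly the definition of $\mu^\layer(t)$.

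Third---the heart of the argument---I would prove by induction on $l$ that the part $D^\layer_{l_0}(t)$ of $\|\delta s^\layer(t)\|$ generated by a single source difference $s^{(l_0)}:=\|s_1^{(l_0)}(0)-s_2^{(l_0)}(0)\|$ already has the claimed shape; since the bounding recursion is linear with nonnegative coefficients, superposition lets me sum these responses at the end. The base case $D^{(l_0)}_{l_0}(t)\le\mu^{(l_0)}(t)\,\rho(A_s^{(l_0)}(k))^t\,s^{(l_0)}$ is immediate, and the inductive step feeds the previous layer's response through the convolution
\[
D^\layer_{l_0}(t)\ \le\ \|a_x^\layer(k)\|\sum_{\tau=0}^{t-1}\mu^\layer(t-1-\tau)\,\rho(A_s^\layer(k))^{t-1-\tau}\,D^{(l-1)}_{l_0}(\tau+1).
\]
Bounding every rate by $\max_{i=l_0,\dots,l}\rho(A_s^{(i)}(k))$ collapses the two geometric factors to a single $\max_{i=l_0,\dots,l}\rho(A_s^{(i)}(k))^t$, and the hockey-stick identity $\sum_{j=0}^{t}\binom{j+m}{m}=\binom{t+m+1}{m+1}$ raises the binomial degree by one at each transition, producing $\binom{t+(L-l_0)}{L-l_0}=\frac{(t+L-l_0)!}{t!\,(L-l_0)!}$ after the $L-l_0$ steps up to the top layer, while each step contributes one factor $\|a_x^{(i)}(k)\|$ and one factor $\mu^{(i)}$. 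Summing over $l_0=1,\dots,L$ (the term $l_0=L$ being the leading $\mu^\Layer(t)\,\rho(A_s^\Layer(k))^t\,s^\Layer$) reproduces $\tilde\beta$.

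The step I expect to demand the most care is pulling the time-varying factors $\mu^{(i)}$ out of the nested convolutions so that the cascade closes with the clean product $\mu^{(l:L)}(t)$: the induction wants $\mu^{(i)}(s)\le\mu^{(i)}(t)$ for $0\le s\le t$, yet $\mu^{(i)}$ is not monotone in $t$ in general (for $r^\layer=0$ and $|\nu^\layer|<|\lambda_1^\layer|$ its square drops from $2$ at $t=0$ to $1+|\nu^\layer|^2/|\lambda_1^\layer|^2$ at $t=1$). I would therefore either establish the monotonicity actually needed on the relevant range of $t$, or replace $\mu^{(i)}(t)$ by $\sup_{0\le s\le t}\mu^{(i)}(s)$ and check that it is still dominated by the stated expression. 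The remaining points are routine but error-prone: the nonnegativity required to justify the entrywise inequalities and the superposition, the index shift $\tau\mapsto\tau+1$ that couples layer $l-1$ at time $t+1$ to layer $l$ at time $t$, and the degenerate case $\lambda_1^\layer=\lambda_2^\layer$, handled throughout by the $r^\layer\to 1$ limit.
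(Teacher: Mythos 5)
Your proposal follows essentially the same route as the paper's own proof (which is embedded in the proof of Theorem~\ref{thm:deltaISS-again} in Appendix~\ref{appendix:stability}): the entrywise affine recursion with the nonnegative matrices $A_s^{(l)}(k)$ and vectors $a_x^{(l)}(k)$, the one-directional cascade through layers via $x^{(l)}(t)=h^{(l-1)}(t+1)$, the Schur-triangularization bound $\|A_s^{(l)}(k)^t\|\le\mu^{(l)}(t)\,\rho(A_s^{(l)}(k))^t$, and the binomial count of the nested time sums (your induction with the hockey-stick identity is just a reorganization of the paper's explicit nested-sum computation). The non-monotonicity of $\mu^{(l)}$ that you flag is a genuine subtlety, but it is present and silently glossed over in the paper's derivation as well, so your proposed remedy of working with $\sup_{0\le s\le t}\mu^{(l)}(s)$ would, if anything, make the argument more careful than the original.
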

\begin{proof}[Proof sketch]
This representation is given in the proof of Theorem~\ref{thm:deltaISS}, specifically by repeating the evaluation of the inputs of each layer with the state variables of front layers according to \eqref{x=h-LSTM}.     
\end{proof}

\begin{theorem}[Upper Bound of Recovery Time] \label{thm: UpperBoundRT}
We assume that $s^\layer(0)\in\mathcal{S}^\layer(k)$ and $\rho(A^{\layer}_{s}(k)) <1$ for all $l \in [L]$. Let infinite length inputs $x(t), \widehat{x}(t) \in  [-x_\rmmax,x_\rmmax]^{n_x}, t\ge0$ and $e > 0$. We assume that there exists $t_0$ such that $x(t)=\widehat{x}(t)$ for any $t \ge t_0$. Then, we have
\begin{align}
    &\sup_{s(0) \in \prod^L_{l=1}S^\layer_k}T_R(x, \widehat{x},s(0);e) \notag \\
    &\le  \min \{t \ge 0 : \tilde{\beta}(2\sqrt{n_c}\bar{s}(k),t'; k) \le e/\|U_y\| \ \text{for any} \ t' \ge t \},
\end{align}
where $\bar{s}(k)=(\bar{s}^{(1)}(k),\dots,\bar{s}^{(L)}(k))$ and $
\bar{s}^{(l)}(k)=\|(\bar{c}^{(l)}(k) , \phi (\bar{c}^{(l)}(k)) \bar{\sigma}^{(l)}_o (k))\|$.
\end{theorem}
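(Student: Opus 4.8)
The plan is to reduce the recovery-time bound to a pure state-convergence (i.e.\ $\beta$-type) estimate by exploiting that the two input streams agree after $t_0$. Fix $s(0)\in\prod_{l=1}^L\mathcal{S}^\layer(k)$ and write $s_1(t)=s(t,s(0),x(0:t))$ and $s_2(t)=s(t,s(0),\widehat x(0:t))$ for the two trajectories. For $t\ge t_0$ both are driven by the \emph{same} admissible input $x(t)=\widehat x(t)$, so I would re-index time by treating $t_0$ as the initial instant: the pair $(s_1(t),s_2(t))$ for $t\ge t_0$ is exactly a pair of LSTM trajectories with a common input and initial states $s_1(t_0),s_2(t_0)$, evolved for $t-t_0$ steps. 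This is precisely the setting of Proposition~\ref{cor:deltaISS}, giving, for every $t\ge t_0$,
\[
\|s^{(L)}_1(t)-s^{(L)}_2(t)\|\le \tilde\beta\big(\|s^{(1)}_1(t_0)-s^{(1)}_2(t_0)\|,\dots,\|s^{(L)}_1(t_0)-s^{(L)}_2(t_0)\|,\,t-t_0;\,k\big).
\]

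To invoke Proposition~\ref{cor:deltaISS} I must first certify that $s_1(t_0)$ and $s_2(t_0)$ lie in $\prod_l\mathcal{S}^\layer(k)$. This is where Proposition~\ref{prop:invariant} does the work: since $s(0)\in\prod_l\mathcal{S}^\layer(k)$ and each $\mathcal{S}^\layer(k)$ is invariant under \emph{any} admissible input, both trajectories remain in the invariant set for all time, independently of their pre-$t_0$ histories. The same invariance bounds the initial gap by the diameter of $\mathcal{S}^\layer(k)=\mathcal{C}^\layer(k)\times\mathcal{H}^\layer(k)$: for $c_1,c_2\in\mathcal{C}^\layer(k)$ one has $\|c_1-c_2\|\le\sqrt{n_c}\,\|c_1-c_2\|_\infty\le 2\sqrt{n_c}\,\bar c^\layer(k)$, and likewise the hidden component is bounded by $2\sqrt{n_c}\,\phi(\bar c^\layer(k))\bar\sigma^\layer_o(k)$; combining the two Euclidean blocks gives $\|s^\layer_1(t_0)-s^\layer_2(t_0)\|\le 2\sqrt{n_c}\,\bar s^\layer(k)$. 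Since $\tilde\beta$ is, by inspection of \eqref{beta-multi-layer-LSTM}, a sum of nonnegative coefficients times its state arguments and hence monotone nondecreasing in each, I may replace every initial-gap argument by $2\sqrt{n_c}\,\bar s^\layer(k)$ to obtain $\|s^{(L)}_1(t)-s^{(L)}_2(t)\|\le\tilde\beta(2\sqrt{n_c}\,\bar s(k),\,t-t_0;\,k)$ uniformly in $s(0)$.

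Next I would pass from states to outputs and conclude. By \eqref{y^L-LSTM}, $\|y_1(t)-y_2(t)\|=\|U_y(h^{(L)}_1(t)-h^{(L)}_2(t))\|\le\|U_y\|\,\|s^{(L)}_1(t)-s^{(L)}_2(t)\|$, the last step because $h^{(L)}$ is a subvector of $s^{(L)}=(c^{(L)},h^{(L)})$. Chaining gives $\|y_1(t)-y_2(t)\|\le\|U_y\|\,\tilde\beta(2\sqrt{n_c}\,\bar s(k),\,t-t_0;\,k)$ for all $t\ge t_0$. Let $T^\ast$ be the right-hand side of the claim, i.e.\ the least $t$ beyond which $\tilde\beta(2\sqrt{n_c}\bar s(k),t';k)\le e/\|U_y\|$ for all $t'\ge t$. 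Then for any $t'\ge t_0+T^\ast$, taking $\tau=t'-t_0\ge T^\ast$ yields $\|y_1(t')-y_2(t')\|\le\|U_y\|\cdot e/\|U_y\|=e$, so $t=t_0+T^\ast$ lies in the set defining $T_R$; hence $T_R(x,\widehat x,s(0);e)\le(t_0+T^\ast)-t_0=T^\ast$. As every bound is independent of $s(0)\in\prod_l\mathcal{S}^\layer(k)$, taking the supremum preserves the inequality, which is the assertion.

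The step I expect to be most delicate is the clean time-shift reduction in the first paragraph: one must verify that Proposition~\ref{cor:deltaISS} is genuinely a statement about trajectories started from an arbitrary in-set initial condition under a common future input, so that applying it from $t_0$ rather than from $0$ is legitimate, and that the invariance of $\mathcal{S}^\layer(k)$ under \emph{all} admissible inputs is exactly what furnishes $s_1(t_0),s_2(t_0)\in\prod_l\mathcal{S}^\layer(k)$ despite the differing histories on $[0,t_0)$. The remaining diameter estimate and the monotonicity of $\tilde\beta$ are routine.
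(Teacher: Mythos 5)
Your proposal is correct and follows essentially the same route as the paper's own proof: shift the time origin to $t_0$, use the invariance of $\mathcal{S}^\layer(k)$ to place $s_1(t_0),s_2(t_0)$ in the invariant set and bound their gap by its diameter $2\sqrt{n_c}\,\bar s^\layer(k)$, apply Proposition~\ref{cor:deltaISS} with the monotonicity of $\tilde\beta$, and pass to outputs via $\|U_y\|$. Your write-up is in fact slightly more explicit than the paper's on the final step converting the output bound into the stated inequality on $T_R$.
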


\begin{proof}[Proof sketch]
This is from Definition~\ref{def:recovery time}, Proposition~\ref{cor:deltaISS} (shifting the initial time by $t_0$), and the fact that the diameter of the invariant set is $2\sqrt{n_c}\bar{s}^{(l)}(k)$.
\end{proof}
Detailed proof is stated in Appendix~\ref{thm: UpperBoundRT-again}.

\vspace{5pt}

\begin{remark}\label{rem:recovery time} Proposition~\ref{cor:deltaISS} and Theorem~\ref{thm: UpperBoundRT} imply the followings:
\begin{enumerate}
\renewcommand{\labelenumi}{\textup{(\roman{enumi})}} 
\item $\tilde{\beta}(s^{(1:L)},t;k)$ is calculated from the weight of LSTM and the recursive formulas~\eqref{DEF-sigma},~\eqref{DEF-phi}, and~\eqref{DEF-M}. Since recovery time can be estimated using only $\tilde{\beta}(s^{(1:L)},t;k)$, this upper bound is a data-independent formula.
\item As $k$ increases, $\rho(A^{\layer}_s(k))$ decreases, consequently leading to a reduction in $\tilde{\beta}(\bar{s}(k),t;k)$. This reduction in $\tilde{\beta}$ results in a sharper upper bound for $T_R$. 
\end{enumerate}
\end{remark}

\section{Proposal}\label{sec:proposal}
We provide a comprehensive strategy for translating theoretical foundations into practical applications. 
\subsection{Estimate}

\paragraph{Aim}
Quantitatively assess the resilience of LSTMs.

\vspace{-10pt}
\paragraph{Method}
Evaluate the obtained model's resilience using the following index:
\begin{align*}
    &\bar{T}_{R}(e;k)  \\
    &=\min \{t \ge 0 : \tilde{\beta} (2\sqrt{n_c}\bar{s}(k),t'; k) \le e/\|U_y\| \ \text{for any} \ t' \ge t\}
\end{align*}
for some $k$. Since $\bar{T}_{R}$ depends solely on the model's weight parameters, it can be evaluated independently of the data. According to Theorem~\ref{thm: UpperBoundRT}, a smaller $\bar{T}_{R}$ indicates a stronger resilience capacity of the model.

\vspace{-10pt}
\paragraph{Use case}
We propose to establish this metric as a formal requirement for assessing LSTM quality, contextualized within the framework of control periods.

\subsection{Training}\label{subsec:training}

\paragraph{Aim}
Build an LSTM with a specified degree of resilience.

\vspace{-10pt}
\paragraph{Method}
Control the model's resilience (i.e., $T_R$) during training by the following loss function:
\begin{align}
Loss &= \mathcal{L} + \lambda \Phi(\theta,\varepsilon), \label{eq:our loss}\\
\Phi(\varepsilon) &= \sum^{L}_{l=1} \max \{\rho(A^\layer_s(k))-1+\varepsilon, 0 \} \label{eq:contorol rresilience}, 
\end{align}
where $\mathcal{L}$ denotes the existing loss function, $\Phi$ denotes the penalty term for the resilience, $\lambda>0$ denotes the penalty intensity, and $\varepsilon \in [0, 0.5]$ denotes the control parameter for the resilience. The resilience of LSTM can be ensured through the design of the loss function, without modifications to the model architecture. In addition, the degree of resilience can be precisely calibrated by adjusting the parameter $\varepsilon$. In fact, as training progresses, $\Phi(\theta,\varepsilon)$ is expected to approach 0, and after training is completed, the model is expected to satisfy $\rho(A^\layer_s(k)) \thickapprox 1-\varepsilon$. Based on Theorems~\ref{thm:deltaISS} and~\ref{thm: UpperBoundRT}, $T_R$ is expected to decrease as $\rho(A^\layer_s(k))$ decreases. Consequently, increasing the value of $\varepsilon$ enhances the resilience of the model. This amplification of resilience concurrently imposes stronger constraints on the model, potentially leading to a trade-off between resilience and model accuracy.

\vspace{-10pt}
\paragraph{Use case} 
We tune $\varepsilon$ to balance resilience and accuracy requirements, observing the trade-off and selecting an optimal model.

\section{Experiments}\label{sec:experiments}
In this section, we verify the following two properties of our proposed method:
\begin{itemize}
    \setlength{\parskip}{0pt}
    \setlength{\itemsep}{5pt}
\item[(A)] Increasing $k$ in $\bar{T}_R(k)$ leads to more accurate recovery time estimates.

\item[(B)] Adjusting $\varepsilon$ enables the tuning of $T_R$, and there is a trade-off between recovery time and inference loss.
\end{itemize}

Detailed experimental settings are given in Appendix~\ref{appendix: experiment set up}.

We also compare our resilience training method and adversarial training in Appendix~\ref{adversal_training}.

\subsection{Experiment {\rm{I}}: Simplified Model}\label{subsec:simplified model}
In this section, we employ a simplified model to validate~(A).

\subsubsection{Experimental Setup}
We employed a simplified model using an LSTM network with randomly initialized weight parameters. We applied step input to this model, introduced instantaneous noise, and observed the time required for inference recovery (for a visual representation of the recovery time measurement, see Figure~\ref{fig:image recovery time}).

\subsubsection{Results and Analysis}
We calculated $T_R$ and $\bar{T}_R(k)$ $(k=0,1,\cdots,20)$ for randomly initialized $20$ models, deriving estimated average test errors and correlation coefficients. As shown in Figure~\ref{fig: k dependency multi model}, the estimation test error consistently decreases as $k$ increases, eventually converging at around $k=20$ steps. Furthermore, the correlation coefficient exhibits a generally increasing (except for a temporary decrease) trend as $k$ increases, which indicates a strong relationship between the true and our estimated values. 

Figure~\ref{fig:prediction} compares the estimation accuracy for $k=0$ and $k=20$, demonstrating improved accuracy with a larger $k$. The left side of the red line is the area where $\delta$ISS is guaranteed by Theorem~\ref{thm:deltaISS}. Increasing $k$ from $0$ to $20$ decreases $\rho(A(k))$ by 32\% on average, then all data points are included in the $\delta$ISS guaranteed area.

To illustrate this improvement in a specific case, we focus on one of these models. As evident in Figure~\ref{fig:recovery toymodel}, the $\bar{T}_R(k)$ value for $k=20$ is smaller than that for $k=0$. Furthermore, this value is closer to the true $T_R$. This indicates that using a larger $k$ value enables a more accurate assessment.

These results offer empirical support for our theoretical statements in Theorem~\ref{thm: UpperBoundRT} and Remark~\ref{rem:recovery time}: increasing $k$ not only relaxes the sufficient condition of $\delta$ISS but also enhances the accuracy of recovery time estimation.

\begin{figure}[t]
    \centering
    \includegraphics[width=0.35 \textwidth]{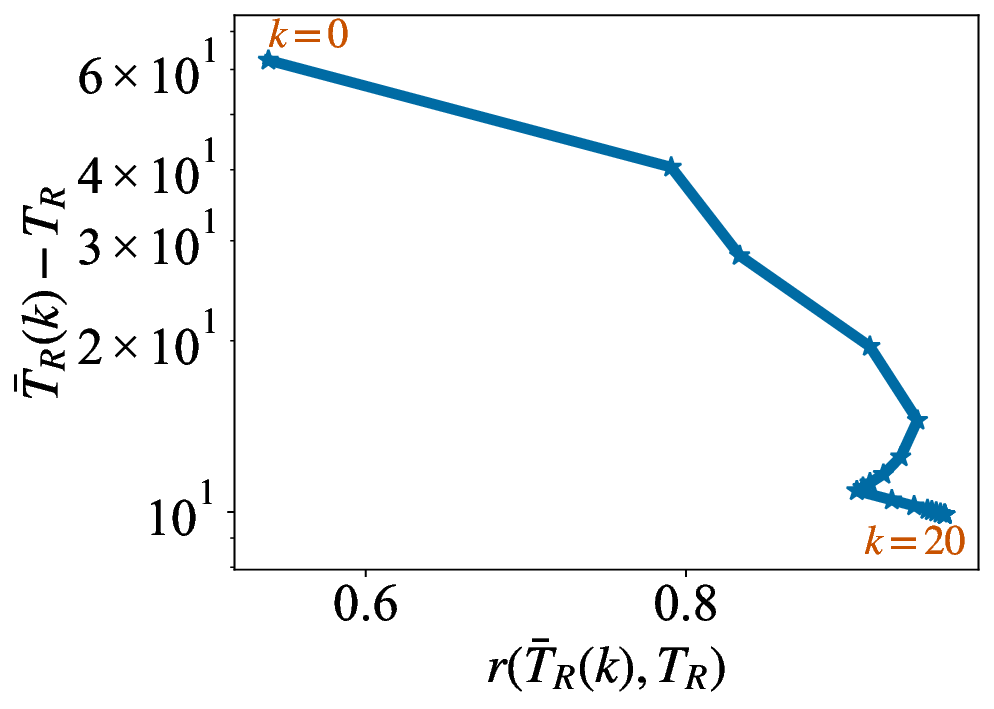}
    \caption{\textbf{k-dependency of $\bar{T}_R(k)$.} Each point represents a pair of the correlation coefficient and estimation error between $\bar{T}_R(k)$ and $T_R$ for $k = 0, 1,\ldots,20$ (averaged over $20$ models). Edges connect points corresponding to consecutive $k$ values. The figure shows that, as $k$ increases, the correlation coefficient tends to increase, and the estimation error monotonically decreases.} 
    \label{fig: k dependency multi model}
\end{figure}

\begin{figure}[t]
  \centering
  \includegraphics[scale=0.45]{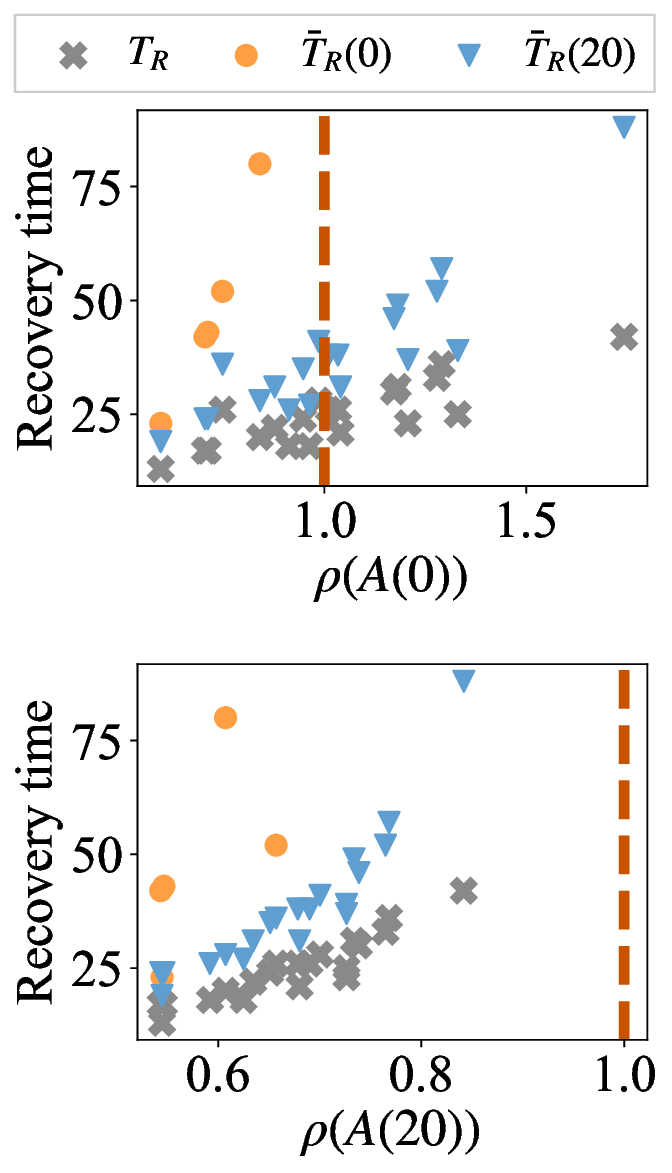}
  \caption{\textbf{Accuracy of recovery time estimation}. Gray dots represent $T_R$ for each model. Orange and blue dots indicate $\bar{T}_R(k)$ in $k=0$ and $k=20$ for each model. The red line represents the $\delta$ISS sufficient condition boundary. The models in the left side area of the red line are guaranteed $\delta$ISS by Theorem~\ref{thm:deltaISS}. Increasing $k$ from $0$ to $20$ decreases $\rho(A(k))$ by 32\% on average over 20 models. This figure indicates that increasing $k$ guarantees more models to be $\delta$ISS.}
  \label{fig:prediction}
\end{figure}

\begin{figure}[H]
    \centering
\subfigure{\includegraphics[width=0.48 \textwidth]{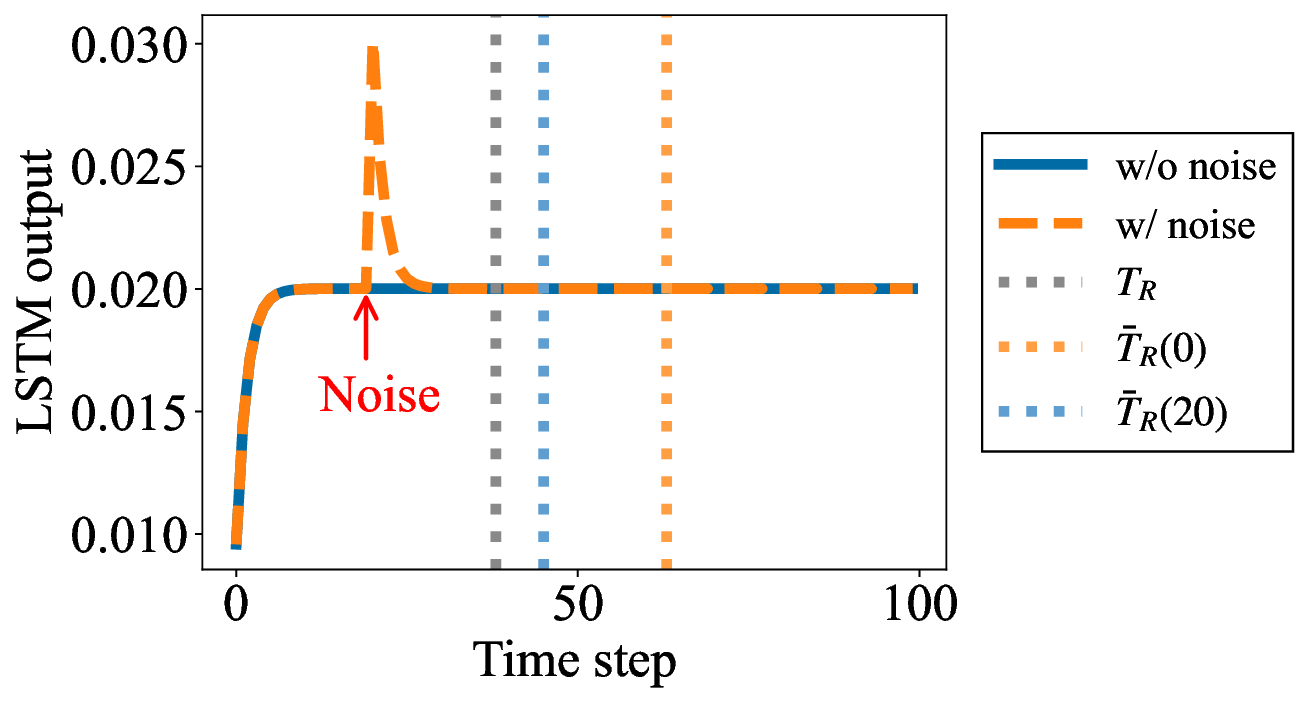}}
    \caption{\textbf{Recovery time of simplified model}. The blue and orange lines depict the LSTM outputs for the normal input without noise and the anomalous input subjected to instantaneous noise, respectively. Vertical lines denote the recovery times, indicating that using a larger $k$ value enables a more accurate assessment.} 
    \label{fig:recovery toymodel}
\end{figure}

\subsection{Experiment \(\rm{I}\hspace{-1pt}\rm{I}\): Two-Tank System}

Here, we demonstrate the effectiveness of our approach from the perspective of (B) in a more practical scenario using the two-tank system, a recognized benchmark in control experiments~\citep{Jung2023,Schoukens2017}.

\subsubsection{Experimental Setup}
\paragraph{System}
The two-tank system is described by the following differential equations~\citep{Schoukens2017}:
\begin{align}
    \frac{dh_1}{dt}&=-p_1\sqrt{2gh_1}+p_2u,\\
    \frac{dh_2}{dt}&=p_3\sqrt{2gh_1}-p_4\sqrt{2gh_2},
\end{align}
where $u$ is the control input, $h_1$ and $h_2$ are the tank levels, $g$ is the gravitational acceleration and $p_1$, $p_2$, $p_3$, and $p_4$ are hyperparameters depending on the system properties. The LSTM model is designed to predict the water level at the next time step based on the current water level and control input. The current water level is obtained from sensors, which are assumed to be subject to sensor noise. Formally, we define the input-output relationship of the LSTM as $x(t) =(u(t), h(t)+w(t))$ and $y(t) = h(t+1)$, where $w \sim \mathcal{N}(0, {0.01}^2)$.

\vspace{-5pt}
\paragraph{Baseline}
To evaluate our proposed recovery time estimation method and adjustment technique, we prepared a baseline model: a standard model trained without considering resilience. As shown in Figure~\ref{fig:base acc}, the baseline model closely approximates the target system for inference.

\subsubsection{Results and Analysis}
We will compare the models trained using our proposed loss function~\eqref{eq:our loss} against the baseline model.

First, regarding (A), although the result in Figure~\ref{fig:tradeoff} is not contradict Theorem~\ref{thm: UpperBoundRT} in the sense of weakly decreasing, the magnitude of this decay was notably smaller than the results in Section~\ref{subsec: def recovery time}.

Next, concerning (B), we observed a trade-off between recovery times and MAE loss when comparing the baseline model and our models, and when varying $\varepsilon$ of our model, as illustrated in Figure~\ref{fig:tradeoff}. 
Specifically, as $\varepsilon$ increases from $0.1$ to $0.5$, we observe a trend of increasing MAE loss and decreasing recovery time. This trend aligns with our theoretical predictions in Section~\ref{subsec:training}. 

The experimental results demonstrate that LSTM resilience can be tuned via $\varepsilon$, validating our proposed method~\eqref{eq:contorol rresilience} that introduces recovery time as a quality assurance metric and balances inference accuracy with recovery time.

\begin{figure}[h]
    \centering
    \includegraphics[width=0.4 \textwidth]{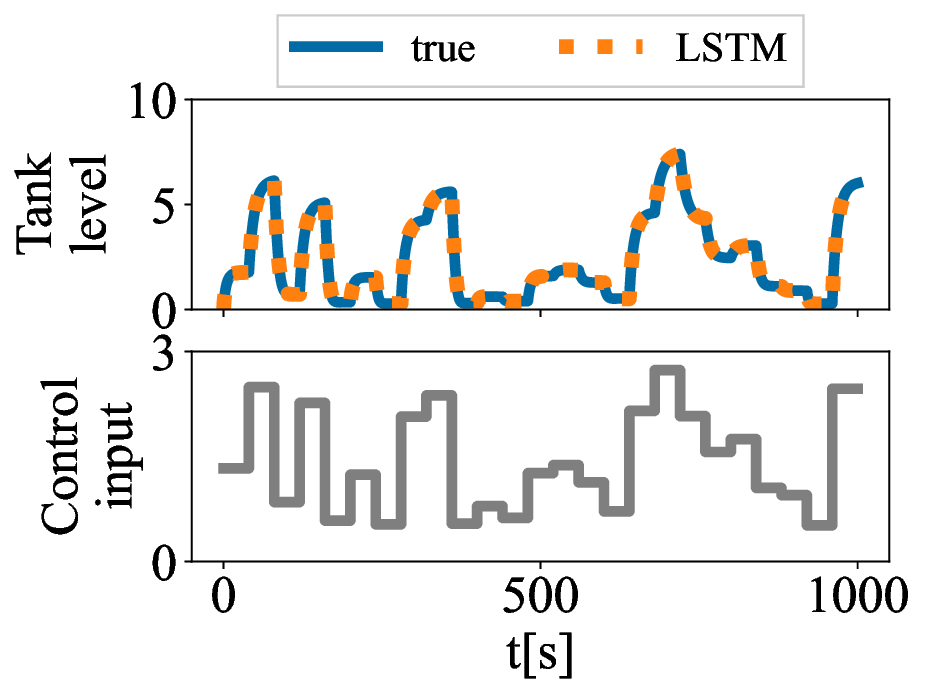}
    \caption{\textbf{Baseline of two-tank system}. Top: The blue line depicts the resulting tank water level $h_1$ while the orange line illustrates the LSTM prediction.
    Bottom: control input signal.  We can confirm that the baseline model closely approximates the target system.}
    \label{fig:base acc}
\end{figure}

\begin{figure}[H]
    \centering
    \includegraphics[width=0.4\textwidth]{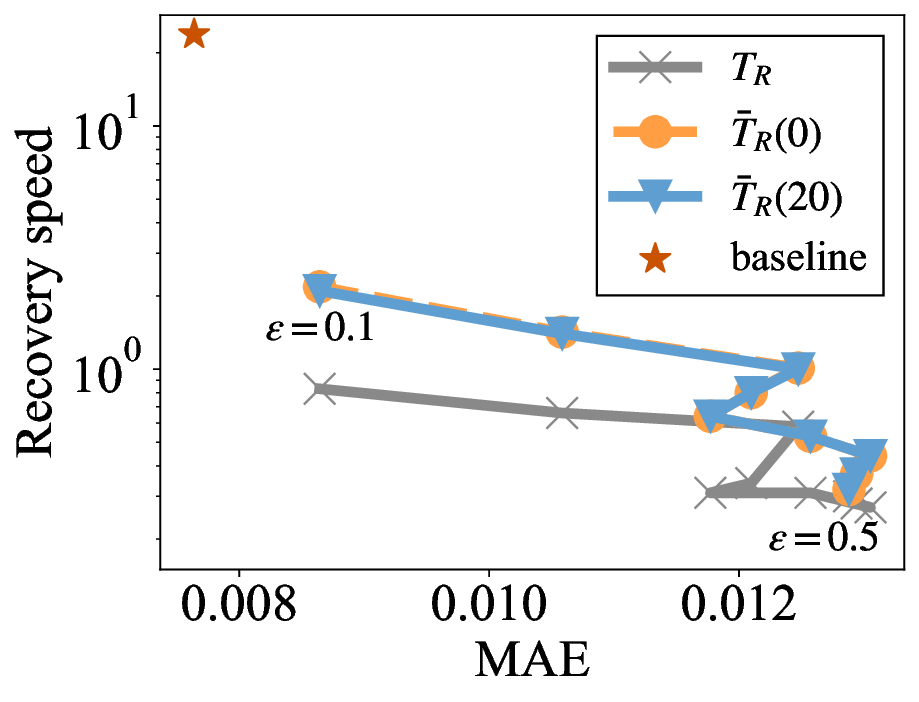}
     \caption{ \textbf{Trade-off controlled by ${\bm\varepsilon}$}.
    The red dot represents the baseline value. For each $\varepsilon$ from $0.1$ to $0.5$ in $0.05$ increments value, gray, orange, and blue dots represent \text{MAE} and $T_R, \bar{T}_R(0)$, and $\bar{T}_R(20)$ pairs, respectively, where MAE is the loss calculated on test data. The figure shows a trade-off between recovery times and MAE when comparing the baseline and our models, and when varying $\varepsilon$ of our model.}
    \label{fig:tradeoff}
\end{figure}

\section{Discussion}
\begin{itemize}
    \item Theoretical and data-independent guarantees may not always reflect actual data distributions. Therefore, we strongly recommend complementing our approach with data-driven quality assurance methods for practical implementation. Also, be aware that applying our method to an unstable system can be counterproductive.
    
    \item As tasks become increasingly dependent on long-term memory, the decline in the accuracy of recovery time estimation and the accuracy degradation caused by the constraints for resilience are likely to emerge as significant challenges. Relaxing the $\delta$ISS sufficient conditions could be a key approach to address these challenges. This is because, as demonstrated in Theorem~\ref{thm: UpperBoundRT}, the $\delta$ISS conditions form the foundation for the recovery time estimation, and relaxing these conditions would contribute to weakening model constraints and mitigating inference accuracy degradation.

    \item Although our current research focused on the resilience of LSTM  \textit{modules} in control systems, comprehensive quality assurance for control \textit{systems} remains a challenging issue.
    Refining studies that have demonstrated the relationship between LSTM stability and control system stability~\citep{terzi21} could be a key approach. By more precisely analyzing the calculations used to prove control system stability in these studies, it may be possible to quantitatively determine the convergence time of the state of the system to the target value.
\end{itemize}

\section*{Acknowledgment}
We are grateful to Professor Tomoyuki Tanaka of Yokohama National University for helpful discussions and comments on the manuscript.

Yoshihara Sota was financially supported by JST SPRING, Grant Number JPMJSP2125. He would like to take this opportunity to thank the ``THERS Make New Standards Program for the Next Generation Researchers.''

\newpage
\bibliographystyle{apalike}
\bibliography{reference}
\newpage
\appendix
\onecolumn
\part*{Appendix}
\section{Other Related Work}
In this paper, we define recovery time as the duration from when different input sequences align until the output difference falls below a threshold. We prove that LSTMs with $\delta$ISS properties can effectively estimate this time. 

Finite time stability (FTS)~\citep{Bhat2000} rigorously discusses convergence time, characterized by a settling-time function. Finite-time input-to-state stability (FTISS), combining FTS and ISS, enables robust control against disturbances~\citep{He2022}. Recent studies~\citep{Liang2023, Mo2023} propose Lyapunov functions satisfying FTISS, eliminating the need to directly consider the settling-time function.

However, deriving sufficient FTS conditions involves strict model constraints, established only for specific models like Hopfield networks~\citep{Michalak2017} and Clifford-valued RNNs~\citep{Shen2023}, but not for conventional RNNs or LSTMs. While achieving FTS for LSTMs would theoretically allow for precise recovery time evaluation, we argue that from a practical standpoint, considering numerical errors and rounding, it suffices for output differences to fall below a threshold rather than strictly reaching equilibrium. Thus, we opted for $\delta$ISS instead of FTS.

\section{Proofs}
\subsection{Proof of Proposition~\ref{prop:invariant}}\label{Proof_in_3.1}
We show the following lemma before proving Proposition~\ref{prop:invariant}.
\begin{lemma}[Monotonically Decreasing and Convergence]
  Let $L\in\mathbb{N}$. $\{\overline{\sigma}^\layer_*(k)\}_{k=0}^{\infty}$, $\{\overline{\phi}^\layer_c(k)\}_{k=0}^{\infty}$, and $\{\eta^\layer(k)\}_{k=-1}^{\infty}$ are monotonically decreasing and converged for all $l\in[L]$, where $\ast$ denote $f,i$, or $o$.
  \label{lemma:decreasing}
\end{lemma}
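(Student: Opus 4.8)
The plan is to treat each layer $l$ separately, since the sequences $\{\overline{\sigma}^\layer_*(k)\}$, $\{\overline{\phi}^\layer_c(k)\}$ and $\{\eta^\layer(k)\}$ are generated by a self-contained recursion depending only on the weights of layer $l$ and on the single scalar $\eta^\layer(k-1)$. The strategy is to exhibit the update as a monotone scalar map $\eta^\layer(k)=F^\layer(\eta^\layer(k-1))$, verify that the first step strictly decreases, and then propagate the decrease by a standard monotone-iteration induction; convergence then follows from boundedness below, and the statements for the other two sequences follow by transfer along continuous monotone maps.

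First I would record the elementary monotonicity of the auxiliary functions $G^\layer_*$ and $G^\layer_c$. Because $|W^\layer_*|$, $|U^\layer_*|$ and $|b^\layer_c|$ have nonnegative entries, increasing $\eta\ge 0$ increases the vector argument of each $G$ componentwise; the positive part $(\cdot)_+$ and the $\infty$-norm both preserve this componentwise ordering, so $\eta\mapsto G^\layer_*(\eta)$ and $\eta\mapsto G^\layer_c(\eta)$ are nondecreasing on $[0,\infty)$. Composing with the increasing maps $\sigma$ and $\phi$ then shows that $\overline{\sigma}^\layer_*(k)$ and $\overline{\phi}^\layer_c(k)$, viewed via \eqref{DEF-sigma} and \eqref{DEF-phi} as functions of $\eta^\layer(k-1)$, are nonnegative and nondecreasing.

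The main step is to assemble these facts into monotonicity of $F^\layer$, and I expect the handling of the division to be the principal obstacle. The delicate quantity is the ratio $\overline{c}^\layer(k)=\overline{\sigma}^\layer_i(k)\overline{\phi}^\layer_c(k)/(1-\overline{\sigma}^\layer_f(k))$: as $\eta^\layer(k-1)$ grows, the two nonnegative numerator factors grow, while the denominator $1-\overline{\sigma}^\layer_f(k)$ shrinks but stays strictly positive since $\sigma<1$; hence $\overline{c}^\layer(k)$ is nonnegative and nondecreasing in $\eta^\layer(k-1)$. Then $\eta^\layer(k)=\phi(\overline{c}^\layer(k))\overline{\sigma}^\layer_o(k)$ from \eqref{DEF-M} is a product of two nonnegative nondecreasing factors, so $F^\layer$ is nondecreasing on $[0,\infty)$; all the remaining verifications are routine bookkeeping.

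To start the induction I would check the base step directly: since $\phi(w)\in[0,1)$ for $w\ge 0$ and $\sigma<1$, we have $\eta^\layer(0)=\phi(\overline{c}^\layer(0))\overline{\sigma}^\layer_o(0)<1=\eta^\layer(-1)$. Feeding $\eta^\layer(0)\le\eta^\layer(-1)$ into the nondecreasing map gives $\eta^\layer(1)=F^\layer(\eta^\layer(0))\le F^\layer(\eta^\layer(-1))=\eta^\layer(0)$, and inductively $\{\eta^\layer(k)\}$ is monotonically decreasing. Each term is a product of nonnegative quantities, so the sequence is bounded below by $0$, and the monotone convergence theorem yields a limit. Finally, because $\overline{\sigma}^\layer_*(k)=\sigma(G^\layer_*(\eta^\layer(k-1)))$ and $\overline{\phi}^\layer_c(k)=\phi(G^\layer_c(\eta^\layer(k-1)))$ are continuous nondecreasing images of the decreasing, convergent sequence $\{\eta^\layer(k-1)\}$, they are themselves monotonically decreasing and convergent, which establishes the claim for every $l\in[L]$.
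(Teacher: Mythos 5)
Your proof is correct and follows essentially the same route as the paper's: both arguments rest on the monotonicity of $G^\layer_*$ and $G^\layer_c$ in $\eta$, the observation that the numerator and the factor $1/(1-\overline{\sigma}^\layer_f)$ of $\overline{c}^\layer$ move in the same direction, the base step $\eta^\layer(0)<\eta^\layer(-1)=1$, and induction plus boundedness below. Your repackaging of the induction as iteration of a single nondecreasing scalar map $F^\layer$ is a slightly cleaner organization of the same idea, not a different argument.
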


\begin{proof} From the definition in Section~\ref{sec:invariant}, $0 \le G^\layer_*(\eta^\layer(k-1)), G^\layer_c(\eta^\layer(k-1))$. Since $\sigma$ and $\phi$ are strictly monotonically increasing functions, we have
$1/2=\sigma(0)\le\overline{\sigma}^\layer_*(k)<1$ and $
0=\phi(0)\le\overline{\phi}^\layer_c(k)<1$. Therefore, $0\le\overline{\sigma}^\layer_i(k)\overline{\phi}^\layer_c(k)$ and $2\le1/(1-\overline{\sigma}^\layer_f(k))$ are obtained, then,
\begin{align}
0=\phi\left(0\times 2\right)\frac{1}{2}\le\phi\left(\frac{\overline{\sigma}^\layer_i(k)\bar{\phi}^\layer_c(k)}{1-\overline{\sigma}^\layer_f(k)}\right)\overline{\sigma}^\layer_o(k)=\eta^\layer(k).
\end{align}
Hence, $\{\overline{\sigma}^\layer_*(k)\}_{k=0}^{\infty}$, $\{\overline{\phi}^\layer_c(k)\}_{k=0}^{\infty}$, and $\{\eta^\layer(k)\}_{k=-1}^{\infty}$ are bounded below sequences. 

In the following, we show that these sequences are monotonically decreasing by induction, therefore converging. 

Firstly, consider the case of $k=0$.
From the above inequalities, and the fact that $\sigma(x)\to\infty$ as $x\to\infty$, and that $0<\overline{\sigma}^\layer_o(0)<1$, we have
\begin{align}
0&\le\phi\left(\frac{\overline{\sigma}^\layer_i(0)\overline{\phi}^\layer_c(0)}{1-\overline{\sigma}^\layer_f(0)}\right)< 1=\eta^\layer(-1),\\
\eta^\layer(0)&=\phi\left(\frac{\overline{\sigma}^\layer_i(0)\overline{\phi}^\layer_c(0)}{1-\overline{\sigma}^\layer_f(0)}\right)\overline{\sigma}_o^\layer(0)\le\overline{\sigma}_o^\layer(0)<\eta^\layer(-1).
 \end{align}
 Substituting $\eta^\layer(0)<\eta^\layer(-1)$ for $G_*(\eta)$ and $G_c(\eta)$, we obtain
 \begin{align}
     G_*(\eta^\layer(0))&= \Mnorm{(x_\rmmax \absl{W^\layer_*} \bm{1}_{n_x}+ \eta^\layer(0) \absl{U^\layer_*} \bm{1}_{n_c} + b^\layer_*)_+}\\
     &\le\Mnorm{(x_\rmmax \absl{W^\layer_*} \bm{1}_{n_x}+ \eta^\layer(-1) \absl{U^\layer_*} \bm{1}_{n_c} + b^\layer_*)_+}=G_*(\eta^\layer(-1)),\\
     G_c(\eta^\layer(0))&=\Mnorm{x_\rmmax \absl{W^\layer_c} \bm{1}_{n_x}+ \eta^\layer(0) \absl{U^\layer_c} \bm{1}_{n_c} + \absl{b^\layer_c}}\\
     &\le\Mnorm{x_\rmmax \absl{W^\layer_c} \bm{1}_{n_x}+ \eta^\layer(-1) \absl{U^\layer_c} \bm{1}_{n_c} + \absl{b^\layer_c}}=G_c(\eta^\layer(-1)).
 \end{align}
Since $\sigma$ and $\phi$ are strictly monotonically increasing functions, the following hold:
 \begin{align}
\overline{\sigma}^\layer_*(1)&=\sigma(G_*(\eta^\layer(0)))
 \le\sigma(G_*(\eta^\layer(-1)))=\overline{\sigma}^\layer_k(0)\label{sigma-1-0},\\
\overline{\phi}^\layer_c(1)&=\phi(G_c(\eta^\layer(0)))\le\phi(G_c(\eta^\layer(-1)))=\overline{\phi}^\layer_c(0)\label{phi-1-0}.
 \end{align}
Therefore, $\overline{\sigma}^\layer_*(1)\le\overline{\sigma}^\layer_*(1)$ and $\overline{\phi}^\layer_c(1)\le\overline{\phi}^\layer_c(0)$. Applying these inequality for $\overline{c}^\layer(1)$ and $\eta^\layer(1)$, we have
\begin{align}
\overline{c}^\layer(1)&=\frac{\overline{\sigma}^\layer_i(1)\overline{\phi}^\layer_c(1)}{1-\overline{\sigma}^\layer_f(1)}\le\frac{\overline{\sigma}^\layer_i(0)\overline{\phi}^\layer_c(0)}{1-\overline{\sigma}^\layer_f(1)}\le\frac{\overline{\sigma}^\layer_i(0)\overline{\phi}^\layer_c(0)}{1-\overline{\sigma}^\layer_f(0)}=\overline{c}^\layer(0), \\
\eta^\layer(1)&=\phi(\overline{c}^\layer(1))\overline{\sigma}^\layer_o(1)\le\phi(\overline{c}^\layer(0))\overline{\sigma}^\layer_o(1)=\eta^\layer(0).
\end{align}
Thus, we obtain that $\Bar{\sigma}^\layer_*(1)\le\overline{\sigma}^\layer_*(0)$, $\overline{\phi}^\layer_c(1)\le \overline{\phi}^\layer_c(0)$ and $\eta^\layer(1)\le \eta^\layer(0) \le \eta^\layer(-1)=1$. 

Secondly, consider the case of $k=k_0\in \mathbb{Z}_{\geq 0}$ and assume that $\overline{\sigma}^\layer_*(k_0+1)\le\overline{\sigma}^\layer_*(k_0)$, $\overline{\phi}^\layer_c(k_0+1)\le \overline{\phi}^\layer_c(k_0)$, and  $\eta^\layer(k_0+1)\le \eta^\layer(k_0)$. 
From assumption, $G_*(\eta^\layer(k_0))\le G_*(\eta^\layer(k_0+1))$ and $G_c(\eta^\layer(k_0))\le G_c(\eta^\layer(k_0+1))$ hold in the same way as the case of $k=0$. Therefore,
$\overline{\sigma}^\layer_*(k_0+2)\le\overline{\sigma}^\layer_*(k_0+1)$ and $
\overline{\phi}^\layer_c(k_0+2) 
 \le\overline{\phi}^\layer_c(k_0+1)$
 are obtained. By using these inequalities, we have
\begin{align}
\overline{c}^\layer(k_0+2)=\frac{\overline{\sigma}^\layer_i(k_0+2)\overline{\phi}^\layer_c(k_0+2)}{1-\overline{\sigma}^\layer_f(k_0+2)}\le\frac{\overline{\sigma}^\layer_i(k_0+1)\overline{\phi}^\layer_c(k_0+1)}{1-\overline{\sigma}^\layer_f(k_0+2)}\le\frac{\overline{\sigma}^\layer_i(k_0+1)\overline{\phi}^\layer_c(k_0+1)}{1-\overline{\sigma}^\layer_f(k_0+1)}=\overline{c}^\layer(k_0+1).
\end{align}
In addition, $\overline{\sigma}^\layer_o(k_0+2)\le\overline{\sigma}^\layer_o(k_0+1)$ and $\phi$ is strictly monotonically increasing. 
Then, we obtain that $\eta
(k_0+2)\le \eta(k_0+1)$.
\end{proof}

Here we restate and prove Proposition~\ref{prop:invariant}:
\begin{proposition}[Invariant Set] 
Let $L\in\mathbb{N}$ and $k \in \mathbb{Z}_{\geq 0}$. $\mathcal{S}^\layer(k)=\mathcal{C}^\layer(k) \times \mathcal{H}^\layer(k)$ is an invariant set, that is, for all $t \in \mathbb{Z}_{\ge 0}$, $l\in[L]$, and $x \in [-x_\rmmax,x_\rmmax]^{n_x}$, if $s^\layer(0)=(c^\layer(0),  h^\layer(0))\in\mathcal{S}^\layer(k)$, then $ s^\layer(t)=(c^\layer(t), h^\layer(t)) \in \mathcal{S}^\layer(k)$. Furthermore, the sequence $\{\mathcal{S}^\layer(k)\}_{k=0}^\infty$ is a decreasing sequence with respect to $k$ in the sense of set inclusion. 
\end{proposition}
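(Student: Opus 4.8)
The plan is to reduce both assertions to the monotone-decrease fact already established in Lemma~\ref{lemma:decreasing}. The decreasing-sequence claim comes for free: each of $\mathcal{C}^\layer(k)$ and $\mathcal{H}^\layer(k)$ is a $\|\cdot\|_{\infty}$-ball, of radius $\overline{c}^\layer(k)$ and $\eta^\layer(k)=\phi(\overline{c}^\layer(k))\overline{\sigma}^\layer_o(k)$ respectively, and Lemma~\ref{lemma:decreasing} shows both radii are nonincreasing in $k$ (the monotonicity of $\overline{c}^\layer$ being produced along the way to that of $\eta^\layer$). Shrinking radii give $\mathcal{S}^\layer(k+1)\subseteq\mathcal{S}^\layer(k)$, so I would dispose of this part first.

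For invariance I fix $k$ and $l$ and prove the one-step property $s^\layer(t)\in\mathcal{S}^\layer(k)\Rightarrow s^\layer(t+1)\in\mathcal{S}^\layer(k)$; the stated claim then follows by a trivial induction on $t$ starting from $s^\layer(0)\in\mathcal{S}^\layer(k)$. The core step is a componentwise bound on each gate. Using $\|x^\layer(t)\|_\infty\le x_\rmmax$ — which holds for $l=1$ by hypothesis and for $l\ge 2$ because $h^{(l-1)}\in(-1,1)^{n_c}$ with the convention $x_\rmmax=1$ there — together with $\|h^\layer(t)\|_\infty\le\eta^\layer(k)\le\eta^\layer(k-1)$, I bound each pre-activation of the $\sigma$-gates componentwise by $x_\rmmax|W^\layer_*|\bm{1}_{n_x}+\eta^\layer(k-1)|U^\layer_*|\bm{1}_{n_c}+b^\layer_*$, and the $\phi$-gate pre-activation in absolute value by $x_\rmmax|W^\layer_c|\bm{1}_{n_x}+\eta^\layer(k-1)|U^\layer_c|\bm{1}_{n_c}+|b^\layer_c|$. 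Passing to the $\infty$-norm (with positive part for the $\sigma$-gates) and using monotonicity of $\sigma,\phi$ reproduces exactly the bounds $\overline{\sigma}^\layer_*(k)$ and $\overline{\phi}^\layer_c(k)$ of \eqref{DEF-sigma}--\eqref{DEF-phi}. The inequality $\eta^\layer(k)\le\eta^\layer(k-1)$ from Lemma~\ref{lemma:decreasing} is precisely what licenses replacing the membership radius $\eta^\layer(k)$ by the value $\eta^\layer(k-1)$ appearing inside $G^\layer_*$.

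With these gate bounds, \eqref{c^l-LSTM} gives $|c^\layer(t+1)_{(i)}|\le\overline{\sigma}^\layer_f(k)\,\overline{c}^\layer(k)+\overline{\sigma}^\layer_i(k)\overline{\phi}^\layer_c(k)$ for each $i$, and substituting the defining identity $\overline{c}^\layer(k)=\overline{\sigma}^\layer_i(k)\overline{\phi}^\layer_c(k)/(1-\overline{\sigma}^\layer_f(k))$ collapses the right-hand side to $\overline{c}^\layer(k)$, so $c^\layer(t+1)\in\mathcal{C}^\layer(k)$. Then \eqref{h^l-LSTM} with oddness and monotonicity of $\phi$ yields $|h^\layer(t+1)_{(i)}|\le\overline{\sigma}^\layer_o(k)\,\phi(\overline{c}^\layer(k))=\eta^\layer(k)$, so $h^\layer(t+1)\in\mathcal{H}^\layer(k)$, completing the step.

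I expect the main obstacle to be the self-consistent closing of the cell-state estimate: $\overline{c}^\layer(k)$ is exactly the fixed point of the affine map $r\mapsto\overline{\sigma}^\layer_f(k)\,r+\overline{\sigma}^\layer_i(k)\overline{\phi}^\layer_c(k)$, whose slope $\overline{\sigma}^\layer_f(k)\in(1/2,1)$ makes it a contraction, and it is this fixed-point structure that forces the one-step bound to reproduce $\overline{c}^\layer(k)$ rather than something strictly larger. The rest is routine but must be handled carefully: the $\infty$-norm/positive-part manipulations and the $x_\rmmax=1$ convention for upper layers, so that the per-layer bounds hold uniformly over all admissible inputs.
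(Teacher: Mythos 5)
Your proposal is correct and follows essentially the same route as the paper: the set-inclusion claim is read off from Lemma~\ref{lemma:decreasing}, and invariance is proved by a one-step induction in which each gate pre-activation is bounded via $\|h^\layer(t)\|_\infty\le\eta^\layer(k)\le\eta^\layer(k-1)$ to recover $\overline{\sigma}^\layer_*(k)$ and $\overline{\phi}^\layer_c(k)$, after which the identity $\overline{c}^\layer(k)=\overline{\sigma}^\layer_f(k)\,\overline{c}^\layer(k)+\overline{\sigma}^\layer_i(k)\overline{\phi}^\layer_c(k)$ closes the cell-state estimate. Your explicit use of absolute values together with the oddness of $\phi$ (the paper only writes one-sided bounds on the components) and your remark on the fixed-point structure of $\overline{c}^\layer(k)$ are harmless refinements of the same argument.
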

\begin{proof} From Lemma~\ref{lemma:decreasing} and the definition of $\mathcal{C}^\layer(k)$ and $\mathcal{H}^\layer(k)$ in Section~\ref{sec:invariant}, it is obvious that $\{\mathcal{S}^\layer(k)\}_k$ is a decreasing sequence.

Below, we show $\{\mathcal{S}^\layer(k)\}_k$ is an invariant set. Assume $s^\layer(0)=(c^\layer(0), h^\layer(0))\in\mathcal{S}^\layer(k)$, then we need only to check that
\begin{align}
    s^\layer(t)=(c^\layer(t), h^\layer(t))\in\mathcal{S}^\layer(k) \ \Longrightarrow \ s^\layer(t+1)=(c^\layer(t+1), c^\layer(t+1)) \in \mathcal{S}^\layer(k)
\end{align}
for all $t \in \mathbb{Z}_{\geq 0}$ and $x \in [-x_\rmmax,x_\rmmax]^{n_x}$.
\newpage
First, we assume $c^\layer(t)\in\mathcal{C}^\layer(k)$. Since $h^\layer(t)\in \mathcal{H}^\layer(k)$,
\begin{align}
\|h^\layer(t)\|_{\infty}\le \phi(\overline{c}^\layer(k))\overline{\sigma}^\layer_o(k)=\eta^\layer(k).
\end{align}
Therefore, for all $j\in\{1,\dots,n_c\}$, it hold that
\begin{align}
    \sigma(W^{(l)}_* x^{(l)}(t) + U^{(l)}_* h^{(l)}(t) + b^{(l)}_*)_{(j)}&\le \sigma\left( \Mnorm{(x_\rmmax \absl{W^\layer_*} \bm{1}_{n_x}+ \absl{U^\layer_*} h^\layer(t) + b^\layer_*)_+}\right)\\
    &= \sigma\left( \Mnorm{(x_\rmmax \absl{W^\layer_*} \bm{1}_{n_x}+ \eta^\layer(k)\absl{U^\layer_*} \bm{1}_{n_c} + b^\layer_*)_+}\right)\\
    &\le\sigma\left( \Mnorm{(x_\rmmax \absl{W^\layer_*} \bm{1}_{n_x}+ \eta^\layer(k-1)\absl{U^\layer_*} \bm{1}_{n_c} + b^\layer_*)_+}\right)\\
    &=\sigma(G^\layer_*(\eta^\layer(k-1)))\\
    &=\overline{\sigma}^\layer_*(k),
\end{align}
where $*$ denotes $f,i$ or $o$.
We have $\phi(W^{(l)}_c x^{(l)}(t) + U^{(l)}_c h^{(l)}(t) + b^{(l)}_c)_{(j)}\le\overline{\phi}^\layer_c(k)$ in the same way. Thus, each component of $c^{(l)}(t+1)$ is evaluated as follows:
\begin{align}
     c^{(l)}(t+1)_{(j)}
    &=\sigma(W^{(l)}_f x^{(l)}(t) + U^{(l)}_f h^{(l)}(t) + b^{(l)}_f)_{(j)} \times c^{(l)}(t)_{(j)} \\
    &\quad+ \sigma(W^{(l)}_i x^{(l)}(t) + U^{(l)}_i h^{(l)}(t) + b^{(l)}_i)_{(j)}
    \times \phi(W^{(l)}_c x^{(l)}(t) + U^{(l)}_c h^{(l)}(t) + b^{(l)}_c)_{(j)} \\ &\le\overline{\sigma}^\layer_f(k) \|c^{(l)}(t)\|_{\infty}+\overline{\sigma}^\layer_i(k)\overline{\phi}^\layer_c(k)\\
    &=\overline{\sigma}^\layer_f(k) \frac{\overline{\sigma}^\layer_i(k)  \overline{\phi}^\layer_c(k)}{1-\overline{\sigma}^\layer_f(k)}+\overline{\sigma}^\layer_i(k) \overline{\phi}^\layer_c(k)\\
    &=\overline{c}^\layer(k).
\end{align}
This inequality holds for all $j \in \{1,\dots,n_c\}$, therefore  $c^\layer(t+1)\in\mathcal{C}^\layer(k)$. 

Next, we assume $h^\layer(t)\in \mathcal{H}^\layer(k)$. Since hyperbolic tangent $\phi$ is strictly monotonically increasing,
\begin{align}
h^{(l)}(t+1)_{(j)}&=\sigma(W^{(l)}_o x^{(l)}(t) + U^{(l)}_o h^{(l)}(t) + b^{(l)}_o)_{(j)}\times \phi(c^{(l)}(t+1))_{(j)}\\
&\le\overline{\sigma}^\layer_o(k)\phi(\overline{c}^\layer(k)).
\end{align}
This inequality also holds for all $j \in \{1,\dots,n_c\}$, therefore $h^\layer(t+1)\in\ \mathcal{H}^\layer(k)$.

Hence, we conclude that $s^\layer(t+1)\in\mathcal{S}^\layer(k)$ when $s^\layer(t)\in\mathcal{S}^\layer(k)$.
\end{proof}
\subsection{Approximation of $\eta^\layer(\infty)$ }
Denote the limits of sequences $\{\overline{\sigma}^\layer_*(k)\}_{k=0}^{\infty}$ ($*$ denotes $f,i$, or $o$), $\{\overline{\phi}^\layer_c(k)\}_{k=0}^{\infty}$, and $\{\eta^\layer(k)\}_{k=-1}^{\infty}$ as $\overline{\sigma}^\layer_*(\infty)$, $\overline{\phi}^\layer_c(\infty)$ and $\eta^\layer(\infty)$, respectively. 
Since $\overline{\sigma}^\layer_*(\infty)=\sigma(G^\layer_*(\eta^\layer(\infty)))$ and $\overline{\phi}^\layer_c(\infty)=\phi(G^\layer_c(\eta^\layer(\infty)))$, we have 
\begin{align}
\eta^\layer(\infty)&=\phi\left(\frac{\sigma(G^\layer_i(\eta^\layer(\infty)))\phi(G^\layer_c(\eta^\layer(\infty)))}{1-\sigma(G^\layer_f(\eta^\layer(\infty)))}\right)\sigma(G^\layer_o(\eta^\layer(\infty))).
\label{eq:eta-infty}
\end{align}

If the equation \eqref{eq:eta-infty} could be solved, that is, $\overline{\sigma}^\layer_*(\infty)$ and $\overline{\phi}^\layer_c(\infty)$ could be obtained, we would find the loosest sufficient condition for the LSTM to be $\delta$ISS, and the optimal estimate of recovery time. However, it is difficult to solve it analytically, so we approximate it using properties of convex functions.

Let $w \geq 0$. Since $\sigma(w)$ is convex upward and 
$\frac{d\sigma(w)}{dw} = \sigma(w) (1 - \sigma(w))$, then for $w$ and $ w_0 \ge 0$,
\begin{equation}
    \sigma(w) \leq \sigma(w_0) + 
    \sigma(w_0)
    \left( 1 - \sigma(w_0) 
    \right) (w - w_0) \eqqcolon \widecheck{\sigma}(w; w_0).
\end{equation}
As $\frac{d\phi(w)}{dw} = 1 - \phi^2(w)$, it holds that
\begin{equation}
    \phi(w) \leq \phi(w_0) + (1 - \phi^2(w_0)) (w - w_0)
    \eqqcolon \widecheck{\phi}(w; w_0).
\end{equation}
Therefore, for $w$ and $w_0\ge 0$, we have
\begin{align}
     \sigma(G^\layer_*(w))
    &\leq \widecheck{\sigma}(G^\layer_*(w); G^\layer_*(w_0)),\\
    \phi(G^\layer_c(w))
    &\leq \widecheck{\phi}(G^\layer_c(w); G^\layer_c(w_0)).
\end{align}
From these inequalities, the following inequality holds:
\begin{align}
    &\sigma(G^\layer_o(w)) \phi\left(
        \frac{
            \sigma(G^\layer_i(w)) \phi(G^\layer_c(w))
        }{1 - \sigma(G^\layer_f(w))}
    \right) \eqqcolon \overline{g}^\layer(w) \\
    &\leq \widecheck{\sigma}(G^\layer_o(w); G^\layer_o(w_0))
    \, \widecheck{\phi}\left(
        \frac{
            \widecheck{\sigma}(G^\layer_i(w); G^\layer_i(w_0))
            \widecheck{\phi}(G^\layer_c(w); G^\layer_c(w_0))
        }{1 - \sigma(G^\layer_f(w))} ;
        \frac{
            \sigma(G^\layer_i(w_0)) \phi(G^\layer_c(w_0))
        }{1 - \sigma(G^\layer_f(w_0))}
    \right) \\
    &\eqqcolon {\widecheck{g}}^\layer(w; w_0).
\end{align}
Here, $G_*(w)$, $\widecheck{\sigma}(w, w_0)$, $\widecheck{\phi}(w, w_0)$, and $(1 - \sigma(w))^{-1}$ are non-negative, monotonically increasing, and convex downward functions. 
Since $\widecheck{g}^\layer(w; w_0)$ is represented by the composition and product of these functions, it is also convex downward. Hence, for $0 \leq \kappa^\layer \leq \eta^\layer$, $0 \leq \lambda^\layer \leq 1$, and $\eta^{\layer+} \coloneqq (1 - \lambda) \kappa^\layer + \lambda \eta^\layer$, it follows that
\begin{equation}
    \overline{g}^\layer(\eta^{\layer+})
    \leq \widecheck{g}^\layer(\eta^{\layer+}; \eta^\layer)
    \leq (1 - \lambda^\layer) \widecheck{g}^\layer(\kappa^\layer; \eta^\layer) + \lambda^\layer \overline{g}^\layer(\eta^\layer).
\end{equation}
In particular, if $\eta^\layer(\infty)$ holds $\overline{g}^\layer(\eta^\layer(\infty)) = \eta^\layer(\infty)$
and $\kappa^\layer \leq \eta^\layer(\infty) \leq \eta^\layer$, there exists $\lambda^\layer(\infty)$ such that
$\eta^\layer(\infty) = (1 - \lambda^\layer(\infty)) \kappa^\layer + \lambda^\layer(\infty) \eta^\layer$.
In this case, we have the following:
\begin{gather}
    (1 - \lambda^\layer(\infty)) \kappa^\layer + \lambda^\layer(\infty) \eta^\layer
    \leq (1 - \lambda^\layer(\infty)) \widecheck{g}^\layer(\kappa^\layer; \eta^\layer)
    + \lambda^\layer(\infty) \overline{g}^\layer(\eta^\layer), \\
    \lambda^\layer(\infty) \leq \frac{
        \widecheck{g}^\layer(\kappa^\layer; \eta^\layer) - \kappa^\layer
    }{
        \eta^\layer - \kappa^\layer - \overline{g}^\layer(\eta^\layer) + \widecheck{g}^\layer(\kappa^\layer; \eta^\layer)
    }, \\
    \eta^\layer(\infty) \leq \frac{
        \eta^\layer \, \widecheck{g}^\layer(\kappa^\layer; \eta^\layer) - \kappa^\layer \, \overline{g}^\layer(\eta^\layer)
    }{
        \eta^\layer - \kappa^\layer - \overline{g}^\layer(\eta^\layer) + \widecheck{g}^\layer(\kappa^\layer; \eta^\layer)
    }
    = \eta^\layer - \frac{
        (\eta^\layer - \kappa^\layer) (\eta^\layer - \overline{g}^\layer(\eta^\layer))
    }{
        \eta^\layer - \kappa^\layer - \overline{g}^\layer(\eta^\layer) + \widecheck{g}^\layer(\kappa^\layer; \eta^\layer)
    }.
\end{gather}

Therefore, we can approximate $\eta^\layer(\infty)$ by following three steps:
\begin{enumerate}
    \item Take $\kappa^\layer$ and $\eta^\layer$ such that $\kappa^\layer \leq \eta^\layer(\infty) \leq \eta^\layer$, for example, $\kappa^\layer=\overline{g}^\layer(0)$ and $\eta^\layer=\overline{g}^\layer(1)$.
    \item Calculate $\dfrac{
        \eta^\layer \, \widecheck{g}(\kappa^\layer; \eta^\layer) - \kappa^\layer \, \overline{g}^\layer(\eta^\layer)}{
        \eta^\layer - \kappa^\layer - \overline{g}^\layer(\eta^\layer) + \widecheck{g}^\layer(\kappa^\layer; \eta^\layer)}$.
    \item Substitute the result of Step 2 for $\eta^\layer$. Return to Step 2.
\end{enumerate}
\newpage
\subsection{Proof of Theorem~\ref{thm:deltaISS}} \label{appendix:stability}

In this section, we prove the input-to-state stability and the incremental input-to-state stability of LSTM.

First, we introduce the definition of input-to-state stability.
\begin{definition}[ISS, \citealt{Sontag1990,jiang2001}]
System \eqref{NS} is called input-to-state stable (ISS) if there exist continuous functions $\beta \in \mathcal{KL}$ and $\gamma \in \mathcal{K}_{\infty}$ such that for any initial state $s(0) \in \mathbb{R}^{n_s}$ and for any input sequence $x(0:t) \in [-x_\rmmax,x_\rmmax]^{n_x \times (t+1)}$, $s(t)$ satisfies
\begin{equation}
    \|s(t)\| 
    \leq \beta(\|s(0)\|,t)
    +\gamma(\|x(0:t)\|_{2,\infty})
\end{equation}
for all $t \in \mathbb{Z}_{\geq 0}$.
\end{definition}

Remarking the invariant set in Proposition~\ref{prop:invariant}, the sufficient condition for ISS of LSTM is relaxed (in the case of $k \geq 1$) than \citep[Theorem 1, which is the case of $k=0$]{terzi21} as follows.

\begin{theorem}[ISS of LSTM]\label{thm:ISS}
Let $L \in \mathbb{N}$ and $k \in \mathbb{Z}_{\geq 0}$. We assume that $s^\layer(0)\in\mathcal{S}^\layer(k)$ for all $l \in [L]$. The LSTM is ISS if $\rho(A^{\layer}(k)) <1$ for all $l \in [L]$, where
\begin{align}
    A_s^\layer(k)
    =\begin{bmatrix}
        \overline{\sigma}_f(k) & \overline{\sigma}_i(k) \Enorm{U_c}\\
        \overline{\sigma}_o(k)\overline{\sigma}_f(k) &
        \overline{\sigma}_o(k)\overline{\sigma}_i(k) \Enorm{U_c}
    \end{bmatrix}.
\end{align}
Furthermore, $\rho(A^{\layer}_{s}(k))$ is monotonically decreasing with respect to $k \in \mathbb{Z}_{\geq 0}$.
\end{theorem}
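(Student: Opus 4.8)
The plan is to mirror the structure that the authors sketch for Theorem~\ref{thm:deltaISS}, but working with the \emph{absolute} state norms rather than norms of differences. First I would fix a layer $l$ and, assuming $s^\layer(0)\in\mathcal{S}^\layer(k)$, invoke Proposition~\ref{prop:invariant} so that $s^\layer(t)\in\mathcal{S}^\layer(k)$ for \emph{all} $t$; this guarantees that every gate value is bounded componentwise by the corresponding $\overline{\sigma}^\layer_*(k)$ at every time step. Writing $f(t),i(t),g(t),o(t)$ for the four gate outputs, the cell update $c^\layer(t+1)=f(t)\odot c^\layer(t)+i(t)\odot g(t)$ yields, by the triangle inequality and the fact that componentwise multiplication by a nonnegative vector bounded by $\overline{\sigma}^\layer_f(k)$ scales the Euclidean norm by at most that factor, the estimate $\Enorm{c^\layer(t+1)}\le \overline{\sigma}^\layer_f(k)\Enorm{c^\layer(t)}+\overline{\sigma}^\layer_i(k)\Enorm{g(t)}$. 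Applying the $1$-Lipschitz property of $\phi$ together with $\phi(0)=0$ to $g(t)=\phi(W^\layer_c x + U^\layer_c h^\layer(t)+b^\layer_c)$ gives $\Enorm{g(t)}\le \Enorm{U^\layer_c}\Enorm{h^\layer(t)}+\Enorm{W^\layer_c}\Enorm{x}+\Enorm{b^\layer_c}$, where the last two terms are input-dependent (recall $b^\layer_c$ is treated as part of the input). The hidden update $h^\layer(t+1)=o(t)\odot\phi(c^\layer(t+1))$, again via $1$-Lipschitzness of $\phi$, gives $\Enorm{h^\layer(t+1)}\le \overline{\sigma}^\layer_o(k)\Enorm{c^\layer(t+1)}$. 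Collecting these into the vector $z(t)=(\Enorm{c^\layer(t)},\Enorm{h^\layer(t)})^\top$ produces the componentwise comparison recursion $z(t+1)\le A^\layer_s(k)\,z(t)+B^\layer u(t)$, where the matrix $A^\layer_s(k)$ is exactly the one in the statement and $B^\layer u(t)$ gathers the $\Enorm{W^\layer_c}\Enorm{x}+\Enorm{b^\layer_c}$ contributions. Note that the forget gate only contributes a diagonal factor, and no Lipschitz expansion of the gate arguments is needed for ISS, which is why $A^\layer_s(k)$ is simpler here than in the $\delta$ISS case.

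Next I would unroll this linear recursion to obtain $z(t)\le A^\layer_s(k)^t z(0)+\sum_{j=0}^{t-1}A^\layer_s(k)^{\,j}B^\layer u(t-1-j)$, where the inequality is componentwise and all matrices have nonnegative entries. Because $\rho(A^\layer_s(k))<1$, the powers $A^\layer_s(k)^t$ decay geometrically; bounding $\Enorm{A^\layer_s(k)^t}$ by a constant times $\rho(A^\layer_s(k))^t$ (using a Schur triangulation as in Proposition~\ref{cor:deltaISS} to control the transient) lets me read off a $\mathcal{KL}$ function $\beta$ from the initial-state term, while the geometric series $\sum_j A^\layer_s(k)^{\,j}B^\layer$ converges to $(I-A^\layer_s(k))^{-1}B^\layer$ and produces a linear (hence $\mathcal{K}_\infty$) gain $\gamma$ applied to $\Enorm{x(0:t)}_{2,\infty}$. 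This establishes ISS for a single layer.

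For the multilayer case I would proceed by induction on $l$, using the interconnection $x^{(l)}(t)=h^{(l-1)}(t+1)$ from \eqref{x=h-LSTM}: the output norm of layer $l-1$ feeds the input-dependent term of layer $l$, so composing the single-layer bounds and tracking the resulting products of gains yields the global ISS estimate (this is where the $\mu^{(l:L)}$ and $\prod_i\Enorm{a_x^{(i)}(k)}$ factors of Proposition~\ref{cor:deltaISS} enter). Finally, for the monotonicity of $\rho(A^\layer_s(k))$ in $k$, I would note that by Lemma~\ref{lemma:decreasing} every entry of $A^\layer_s(k)$ is a nonnegative, monotonically decreasing function of $k$; since $A^\layer_s(k)$ is entrywise nonnegative, the monotonicity of the Perron--Frobenius eigenvalue under entrywise domination gives $\rho(A^\layer_s(k+1))\le\rho(A^\layer_s(k))$.

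I expect the main obstacle to be making the passage from the spectral-radius condition $\rho(A^\layer_s(k))<1$ to a genuine $\mathcal{KL}$ bound fully rigorous: $A^\layer_s(k)$ is non-symmetric, so $\Enorm{A^\layer_s(k)^t}$ need not be monotone and may exhibit transient growth before the $\rho^t$ decay dominates. Controlling this transient cleanly --- rather than appealing to a non-explicit constant --- is precisely what the $\mu^\layer(t)$ factors obtained via the triangularization in Proposition~\ref{cor:deltaISS} accomplish, and carrying the same bookkeeping through the $L$-fold layer composition (where binomial-type factors $\tfrac{(t+L-l)!}{t!(L-l)!}$ appear) is the most delicate part of the argument.
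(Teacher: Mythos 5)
Your proposal is correct and follows essentially the same route as the paper's proof: the same componentwise comparison recursion $z(t+1)\le A_s^\layer(k)z(t)+a_x^\layer(k)\Enorm{x^\layer(t)}+a_b^\layer(k)\Enorm{b_c^\layer}$ obtained from the invariant set and the Lipschitz bound $|\phi(w)|\le|w|$, the same unrolling with the triangularization-based transient bound $\mu^\layer(t)\rho(A_s^\layer(k))^t$ and the layer-by-layer composition producing the binomial factors, and the same Perron--Frobenius monotonicity argument (which the paper proves from scratch via a Neumann-series contradiction rather than citing the standard entrywise-domination fact). No gaps.
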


\begin{proof}
From \eqref{c^l-LSTM} and the fact Lipschitz constant of $\phi$ is 1, we have
\begin{align}
    \Enorm{c^\layer(t+1)}
    &\leq \Mnorm{\sigma\left(\WUb[\layer]{t}{f}\right)}\Enorm{c^\layer(t)}\\
    &\quad +\Mnorm{\sigma\left(\WUb[\layer]{t}{i}\right)}\Enorm{\phi\left(\WUb[\layer]{t}{c}\right)} \\
    &\leq \sigma\left(\Mnorm{\WUb[\layer]{t}{f}}\right)\Enorm{c^\layer(t)}\\
    &\quad+\sigma\left(\Mnorm{\WUb[\layer]{t}{i}}\right)\Enorm{\WUb[\layer]{t}{c}} \\
    &\leq \sigma\left(G_f^\layer(\eta^\layer(k-1))\right)\Enorm{c^\layer(t)}+\sigma\left(G_i^\layer(\eta^\layer(k-1))\right)\left(\Enorm{W^\layer_c}\Enorm{x^\layer(t)}+\Enorm{U^\layer_c}\Enorm{h^\layer(t)}+\Enorm{b^\layer_c}\right)\\
    &= \overline{\sigma}^\layer_f(k) \Enorm{c^\layer(t)}+\overline{\sigma}^\layer_i(k)\left(\Enorm{W^\layer_c}\Enorm{x^\layer(t)}+\Enorm{U^\layer_c}\Enorm{h^\layer(t)}+\Enorm{b^\layer_c}\right)
\end{align}
for any $k \in \mathbb{Z}_{\geq 0}$ and $l \in [L]$.
Similarly, \eqref{h^l-LSTM} imply that
\begin{align}
    \Enorm{h^\layer(t+1)}
    &\leq \Mnorm{\sigma\left(\WUb[\layer]{t}{o}\right)}\Enorm{\phi\left(c^\layer(t+1)\right)}\\
    &\leq \sigma\left(\Mnorm{\WUb[\layer]{t}{o}}\right)\Enorm{c^\layer(t+1)}\\
    &\leq \overline{\sigma}^\layer_o(k)\left(\overline{\sigma}^\layer_f(k) \Enorm{c^\layer(t)}+\overline{\sigma}^\layer_i(k)\left(\Enorm{W^\layer_c}\Enorm{x^\layer(t)}+\Enorm{U^\layer_c}\Enorm{h^\layer(t)}+\Enorm{b^\layer_c}\right)\right).
\end{align}
Then, 
\begin{align}
    &\begin{pmatrix}
        \Enorm{c^\layer(t+1)} \\ \Enorm{h^\layer(t+1)}
    \end{pmatrix}\\
    &\leq 
    \begin{bmatrix}
        \overline{\sigma}^\layer_f(k) & \overline{\sigma}^\layer_i(k) \Enorm{U^\layer_c}\\
        \overline{\sigma}^\layer_o(k)\overline{\sigma}^\layer_f(k) &\overline{\sigma}^\layer_o(k)\overline{\sigma}^\layer_i(k) \Enorm{U^\layer_c}
    \end{bmatrix}
    \begin{pmatrix}
        \Enorm{c^\layer(t)} \\
        \Enorm{h^\layer(t)}
    \end{pmatrix}+
    \begin{pmatrix}
        \overline{\sigma}^\layer_i(k) \Enorm{W^\layer_c}\\
        \overline{\sigma}^\layer_o(k) \overline{\sigma}^\layer_i(k) \Enorm{W^\layer_c}
    \end{pmatrix}
    \Enorm{x^\layer}
    +
    \begin{pmatrix}
        \overline{\sigma}^\layer_i(k)\\
        \overline{\sigma}^\layer_o(k) \overline{\sigma}^\layer_i(k)
    \end{pmatrix}
    \Enorm{b^\layer_c} \\
    & \eqcolon A^\layer_s(k)
    \begin{pmatrix}
        \Enorm{c^\layer(t)} \\
        \Enorm{h^\layer(t)}
    \end{pmatrix}
    + a^\layer_x(k) \Enorm{x^\layer} + a^\layer_b(k) \Enorm{b^\layer_c}.
\end{align}
\newpage
For each layer $l \in [L]$ of LSTM, we estimate for any $k \in \mathbb{Z}_{\geq 0}$ that
\begin{align}
    \begin{pmatrix}
        \Enorm{c^\layer(t)} \\ \Enorm{h^\layer(t)}
    \end{pmatrix}
    &\leq A_s^\layer(k)
    \begin{pmatrix}
        \Enorm{c^\layer(t-1)} \\ \Enorm{h^\layer(t-1)}
    \end{pmatrix}
    +a_x^\layer(k) \Enorm{x^\layer(t-1)}+a_b^\layer(k) \Enorm{b_c^\layer} \\
    & \leq A_s^\layer(k)^2
    \begin{pmatrix}
        \Enorm{c^\layer(t-2)} \\ \Enorm{h^\layer(t-2)}
    \end{pmatrix}
    +A_s^\layer(k)a_x^\layer(k) \Enorm{x^\layer(t-2)}+A_s^\layer(k)a_b^\layer(k) \Enorm{b_c^\layer} \\
    &\hspace{60pt} +a_x^\layer(k) \Enorm{x^\layer(t-1)}+a_b^\layer(k) \Enorm{b_c^\layer}\\
    &\hspace{5pt} \vdots \\
    &\leq A_s^\layer(k)^t
    \begin{pmatrix}
        \Enorm{c^\layer(0)} \\ \Enorm{h^\layer(0)}
    \end{pmatrix}
    +\sum_{\tau=0}^{t-1}A_s^\layer(k)^{t-\tau-1} a_x^\layer(k)\Enorm{x^\layer(\tau)}
    +\Enorm{b_c^\layer}\sum_{\tau=0}^{t-1} A_s^\layer(k)^t a_b^\layer(k), \label{el-est}
\end{align}
where $a_b^\layer \coloneqq 0$ for $l=1,\ldots,L-1$. 

For single-layer LSTM (the case of $L=1$), we obtain that for constant $\mu>1$,
\begin{align}
    \Enorm{s(t)}
    &=\Enorm{
    \begin{pmatrix}
        c(t) \\ h(t)
    \end{pmatrix}
    }
    =\sqrt{\Enorm{c(t)}^2+\Enorm{h(t)}^2}
    =\Enorm{
    \begin{pmatrix}
        \Enorm{c(t)} \\ \Enorm{h(t)}
    \end{pmatrix}
    } \\
    &\leq \Enorm{A_s(k)^t}\Enorm{s(0)}
    +\Enorm{\sum_{j=0}^{t-1}A_s(k)^{t-j-1}a_x(k)\Enorm{x(j)}}
    +\Enorm{b_c}\Enorm{\sum_{j=0}^{t-1}A_s(k)^j a_b(k)} \label{sl-est}\\
    &\leq \Enorm{A_s(k)^t}\Enorm{s(0)}
    +\Enorm{\sum_{j=0}^{t-1}A_s(k)^j a_x(k)}\max_{j \in [t]}\Enorm{x(j-1)}
    +\Enorm{\sum_{j~0}^{t-1}A_s(k)^j a_b(k)}\Enorm{b_c} \\
    &\leq \mu \lambda_1(A_s(k))^t \Enorm{s(0)}
    +\Enorm{(I-A_s(k)^t)(I-A_s(k))^{-1}a_x(k)}\EMnorm{x}
    +\Enorm{(I-A_s(k))^{-1}a_b(k)}\Enorm{b_c}. \label{s-LSTM_ISS-upper}
\end{align}
Here, we note that $\lambda_1(A_s(k)) \in (0,1)$ since $\rho(A_s(k))<1$.
Therefore, the first term of~\eqref{s-LSTM_ISS-upper} belongs to class $\mathcal{KL}(\Enorm{s(0)},t)$, the second term belongs to class $\mathcal{K}_\infty (\EMnorm{x})$, and the third term belongs to class $\mathcal{K}_\infty (\Enorm{b_c})$.
This concludes that single-layer LSTM is ISS if $\rho(A_s(k))<1$.

For multi-layer LSTM (the case of $L \geq 2$), it follows from~\eqref{sl-est} that
\begin{align}
    \Enorm{s^\Layer(t)}
    &\leq \Enorm{A_s^\Layer(k)^t}\Enorm{s^\Layer(0)}
    +\Enorm{\sum_{\tau=0}^{t-1}A_s^\Layer(k)^{t-\tau-1}a_x^\Layer(k)\Enorm{x^\Layer(\tau)}}
    +\Enorm{b_c^\Layer}\Enorm{\sum_{\tau=0}^{t-1}A_s^\Layer(k)^\tau a_b^\Layer(k)}. \label{ml-est}
\end{align}
Also, the estimate~\eqref{el-est} imply that
\begin{align}
    \Enorm{h^\layer(t)}
    \leq \llcorner A_s^\layer(k)^t \lrcorner
    \begin{pmatrix}
        \Enorm{c^\layer(0)} \\ \Enorm{h^\layer(0)}
    \end{pmatrix}
    +\sum_{\tau=0}^{t-1} \llcorner A_s^\layer(k)^{t-\tau-1} \lrcorner a_x^\layer(k)\Enorm{x^\layer(\tau)} \label{ml-est_h}
\end{align}
for $l=1,\ldots,L-1$, where $\llcorner A \lrcorner$ denotes the lower horizontal vector $(A_{(2,1)}, A_{(2,2)})$ for a matrix $A=\begin{bmatrix}
    A_{(1,1)} & A_{(1,2)} \\
    A_{(2,1)} & A_{(2,2)} \end{bmatrix}$.
Applying the relation~\eqref{x=h-LSTM} and inequality~\eqref{ml-est_h} to the second term of~\eqref{ml-est} repeatedly, we estimate that
\begin{align}
    &\Enorm{\sum_{\tau=0}^{t-1}A_s^\Layer(k)^{t-\tau-1}a_x^\Layer(k)\Enorm{x^\Layer(\tau)}}
    = \Enorm{\sum_{\tau_L=1}^t A_s^\Layer(k)^{t-\tau_L} a_x^\Layer(k) \Enorm{h^{(L-1)}(\tau_L)}} \\
    &\leq \Enorm{\sum_{\tau_L=1}^t A_s^\Layer(k)^{t-\tau_L} a_x^\Layer(k) \llcorner A_s^{(L-1)}(k)^{\tau_L}\lrcorner} \Enorm{s^{(L-1)}(0)}\\
    &\quad +\Enorm{\sum_{\tau_L=1}^t A_s^\Layer(k)^{t-\tau_L} a_x^\Layer(k) \sum_{\tau_{L-1}=1}^{\tau_L} \llcorner A_s^{(L-1)}(k)^{\tau_L-\tau_{L-1}}\lrcorner a_x^{(L-1)}(k) \Enorm{h^{(L-2)}(\tau_{L-1})}}\\
    &\leq \Enorm{\sum_{\tau_L=1}^t A_s^\Layer(k)^{t-\tau_L} a_x^\Layer(k) \llcorner A_s^{(L-1)}(k)^{\tau_L}\lrcorner} \Enorm{s^{(L-1)}(0)}\\
    &\quad +\Enorm{\sum_{\tau_L=1}^t A_s^\Layer(k)^{t-\tau_L} a_x^\Layer(k) \sum_{\tau_{L-1}=1}^{\tau_L} \llcorner A_s^{(L-1)}(k)^{\tau_L-\tau_{L-1}}\lrcorner a_x^{(L-1)}(k) \llcorner A_s^{(L-2}(k)^{\tau_{L-1}}\lrcorner}\Enorm{s^{(L-2)}(0)}\\
    &\quad\hspace{5pt}\vdots\\
    &\quad +\Enorm{\sum_{\tau_L=1}^t A_s^\Layer(k)^{t-\tau_L} a_x^\Layer(k) \sum_{\tau_{L-1}=1}^{\tau_L}\llcorner A_s^{(L-1)}(k)^{\tau_L-\tau_{L-1}}\lrcorner a_x^{(L-1)}(k)\cdots \sum_{\tau_2=1}^{\tau_3}\llcorner A_s^{(2)}(k)^{\tau_3-\tau_2}\lrcorner a_x^{(2)}(k) \llcorner A_s^{(1)}(k)^{\tau_2}\lrcorner}\Enorm{s^{(1)}(0)}\\
    &\quad +\Enorm{\sum_{\tau_L=1}^t A_s^\Layer(k)^{t-\tau_L} a_x^\Layer(k) \sum_{\tau_{L-1}=1}^{\tau_L}\llcorner A_s^{(L-1)}(k)^{\tau_L-\tau_{L-1}}\lrcorner a_x^{(L-1)}(k) \cdots \sum_{\tau_1=1}^{\tau_2}\llcorner A_s^{(1)}(k)^{\tau_2-\tau_1}\lrcorner a_x^{(1)}(k) \Enorm{x(\tau_1-1)}}\\
    &=\Enorm{\sum_{\tau_L=1}^t A_s^\Layer(k)^{t-\tau_L} a_x^\Layer(k) \llcorner A_s^{(L-1)}(k)^{\tau_L}\lrcorner} \Enorm{s^{(L-1)}(0)}\\
    &\quad +\sum_{m=1}^{L-2}\Bigg\|\sum_{\tau_L=1}^t A_s^\Layer(k)^{t-\tau_L} a_x^\Layer(k) \sum_{\tau_{L-1}=1}^{\tau_L} \llcorner A_s^{(L-1)}(k)^{\tau_L-\tau_{L-1}}\lrcorner a_x^{(L-1)}(k) \sum_{\tau_{L-2}=1}^{\tau_{L-1}} \cdots \\
    &\hspace{120pt} \cdots \sum_{\tau_{m+1}=1}^{\tau_{m+2}}\llcorner A_s^{(m+1)}(k)^{\tau_{m+2}-\tau_{m+1}}\lrcorner a_x^{(m+1)}(k) \llcorner A_s^{(m)}(k)^{\tau_{m+1}}\lrcorner\Bigg\|\Enorm{s^{(m)}(0)}\\
    &\quad +\Enorm{\sum_{\tau_L=1}^t A_s^\Layer(k)^{t-\tau_L} a_x^\Layer(k) \sum_{\tau_{L-1}=1}^{\tau_L} \llcorner A_s^{(L-1)}(k)^{\tau_L-\tau_{L-1}}\lrcorner a_x^{(L-1)}(k) \sum_{\tau_{L-2}=1}^{\tau_{L-1}} \cdots  \sum_{\tau_1=1}^{\tau_2}\llcorner A_s^{(1)}(k)^{\tau_2-\tau_1}\lrcorner a_x^{(1)}\Enorm{x(\tau_1-1)}}\\
    &\leq \sum_{\tau=1}^t \Enorm{A_s^\Layer(k)^{(t-\tau)}}\Enorm{a_x^\Layer}\Enorm{\llcorner A_s^{(L-1)}(k)^\tau \lrcorner}\Enorm{s^{(L-1)}(0)}\\
    &\quad +\sum_{m=1}^{L-2} \left(\sum_{\tau_L=1}^t \sum_{\tau_{L-1}=1}^{\tau_L} \cdots \sum_{\tau_{m+1}=1}^{\tau_{m+2}} \Enorm{A_s^\Layer(k)^{t-\tau_L}} \prod_{l=m+1}^{L-1}\Enorm{\llcorner A_s^\layer(k)^{\tau_{l+1}-\tau_l}\lrcorner}\Enorm{\llcorner A_s^{(m)}(k)^{\tau_{m+1}}\lrcorner}\prod_{j=m+1}^L\Enorm{a_x^{(j)}(k)}\right)\Enorm{s^{(m)}(0)}\\ 
    &\quad +\sum_{\tau_L=1}^t \sum_{\tau_{L-1}=1}^{\tau_L} \cdots \sum_{\tau_1=1}^{\tau_2} \Enorm{A_s^\Layer(k)^{t-\tau_L}}\prod_{l=1}^{L-1}\Enorm{\llcorner A_s^\layer(k)^{\tau_{l+1}-\tau_l}\lrcorner}\prod_{j=1}^L\Enorm{a_x^{(j)}(k)}\Enorm{x(\tau_1-1)},
\end{align}
where the last inequality follows from the inequality $\Enorm{A a_1} \leq \Enorm{A}\Enorm{a_1}$ and equality $\Enorm{a_1 a_2^\top}=\max_{|\xi|=1}\Enorm{a_1 a_2\xi}=\Enorm{a_1}\max_{|\xi|=1}|a_2\xi|=\Enorm{a_1}\Enorm{a_2}$ for matrix $A \in \mathbb{R}^{2 \times 2}$ and vectors $a_1,a_2 \in \mathbb{R}^2$.
Then, we estimate~\eqref{ml-est} again that
\begin{align}
    \Enorm{s^\Layer(t)} 
    &\leq \Enorm{A_s^\Layer(k)^t}\Enorm{x^\Layer(0)}
    +\sum_{m=1}^{L-1}\left(\sum_{\sum_{l=m}^L\tau_l=t, \tau_l \geq 0} \prod_{l=m}^L \Enorm{A_s^\layer(k)^{\tau_l}} \prod_{l=m+1}^L \Enorm{a_x^\layer(k)}\right) \Enorm{s^{(m)}(0)} \\
    &\quad +\sum_{\tau=0}^{t-1} \sum_{\sum_{l=1}^L \tau_l=t-\tau-1, \tau_l \geq 0} \left(\prod_{l=1}^L \Enorm{A_s^\layer(k)^{\tau_l}} \Enorm{a_x^\layer}\right) \Enorm{x(\tau)}
    +\sum_{\tau=0}^{t-1} \Enorm{A_s^\Layer(k)^\tau}\Enorm{a_b^\Layer}\Enorm{b_c^\Layer} \\
    &\leq \mu^\Layer \rho(A_s^\Layer(k))^t \Enorm{s^\Layer(0)}
    +\sum_{m=1}^{L-1} \left(\sum_{\sum_{l=m}^L \tau_l=t} \prod_{l=m}^L \mu^\layer \rho(A_s^\layer(k))^{\tau_l}\right)\left(\prod_{l=m+1}^L \Enorm{a_x^\layer(k)}\right) \Enorm{s^{(m)}(0)} \\
    &\quad +\sum_{\tau=0}^{t-1} \left(\sum_{\sum_{l=1}^L \tau_l=t-\tau-1} \prod_{l=1}^L \mu^\layer \rho(A_s^\layer(k))^{\tau_l}\right)\left(\prod_{l=1}^L \Enorm{a_x^\layer(k)}\right)\Enorm{x(\tau)} 
    +\sum_{\tau=0}^{t-1} \mu^\Layer \rho(A_s^\Layer(k))^\tau \Enorm{a_b^\Layer}\Enorm{b_c^\Layer} \\
    &\leq \mu^\Layer \rho(A_s^\Layer(k))^t \Enorm{s^\Layer(0)}
    +\sum_{m=1}^{L-1} \mu^{(m:L)} \frac{(t+L-m)!}{t!(L-m)!} \max_{l=m,\ldots,L} \rho(A_s^\layer(k))^t \left(\prod_{l=m+1}^L \Enorm{a_x^\layer(k)}\right) \Enorm{s^{(m)}(0)} \\
    &\quad + \sum_{\tau=0}^{t-1} \mu^{(1:L)} \frac{(t+L-\tau-2)!}{(t-\tau-1)!(L-1)!} \max_{l=1,\ldots,L} \rho(A_s^\layer(k))^{t-\tau-1}\left(\prod_{l=1}^L \Enorm{a_x^\layer(k)}\right)\Enorm{x(\tau)} \\
    &\quad +\sum_{\tau=0}^{t-1} \mu^\Layer \rho(A_s^\Layer(k))^\tau \Enorm{a_b^\Layer}\Enorm{b_c^\Layer} \\
    &\leq M_L(k) \rho_L(k)^t \Enorm{s^{(1:L)}(0)}_{2,\infty}
    +M'_L(k) \sum_{\tau=0}^{t-1} \rho'_L(k)^{t-\tau-1} \Enorm{x}_{2,\infty}
    +M''_L(k) \sum_{\tau=0}^{t-1} \rho(A_s^\Layer(k))^\tau \Enorm{b_c^\Layer},
\end{align}
where $M_L(k), M'_L(k)$, and $M''_L(k)$ are constants independent of $t$ and $0<\rho_L(k),\rho'_L(k)<1$.
Therefore, if $\rho(A_s^\layer(k)) \in (0,1)$ for any $l \in [L]$, the multi-layer LSTM is ISS. 

We note that $\mu^\layer$ can be expressed by direct calculation as
\begin{align}
    \Enorm{A_s^\layer(k)^t} 
    &=\Enorm{U 
    \begin{bmatrix}
        \lambda^\layer_1 & \nu^\layer \\
        0 & \lambda^\layer_2
    \end{bmatrix} U^*} 
    =\Enorm{\begin{bmatrix}
        (\lambda_1^\layer)^t &
        \nu^\layer \sum_{\tau=1}^t (\lambda_1^\layer)^{t-\tau}(\lambda_2^\layer)^{\tau-1} \\
        0 & (\lambda_2^\layer)^t
    \end{bmatrix}} \\
    &\leq \frac{1}{\sqrt{2}}
    \left\{(1+(r^\layer)^{2t})(\lambda_1^\layer)^2 + (\nu^\layer)^2 \sum_{\tau=1}^t (r^\layer)^{\tau-1} \sum_{\upsilon=1}^t (r^\layer)^{\upsilon-1}\right\} (\lambda_1^\layer)^{2(t-1)} \\
    &\quad +\left[\left\{(1+(r^\layer)^{2t})(\lambda_1^\layer)^2 + (\nu^\layer)^2 \sum_{\tau=1}^t (r^\layer)^{\tau-1} \sum_{\upsilon=1}^t (r^\layer)^{\upsilon-1}\right\}^2 -4(r^\layer)^{2t} (\lambda_1^\layer)^4 \right]^{\frac{1}{2}} (\lambda_1^\layer)^{2(t-1)} \\
    &\leq \rho(A_s^\layer(k))^t\sqrt{(1+(r^\layer)^{2t}) +\frac{|\nu^\layer|^2}{|\lambda^\layer_1|^2} {\left( \frac{1-(r^\layer)^t}{1-r^\layer} \right)}^2}. \label{mu}
\end{align}

Finally, we show $\rho(A^{\layer}_{s}(k))$ is monotonically decreasing. Suppose that $\rho(A^{\layer}_{s}(k))<\rho(A^{\layer}_{s}(k+1))$. 
Since $A^\layer_s(k)$ and $A^\layer_s(k+1)$ are non-negative matrices,  $\rho(A^{\layer}_{s}(k))$ and $\rho(A^{\layer}_{s}(k+1))$ are Perron-Frobenius eigenvalues of $A^\layer_s(k)$ and $A^\layer_s(k+1)$, respectively.
Also, there exists a non-negative eigenvector $v$ of $A_s^\layer(k+1)$ for eigenvalue $\rho(A^{\layer}_{s}(k+1))$.
From Lemma~\ref{lemma:decreasing}, $A^\layer_s(k)-A^\layer_s(k+1)$ is non-negative matrix. Then, the vector
\begin{align} \label{non-negative}
    -(A^\layer_s(k)-A^\layer_s(k+1))v=\rho(A^{\layer}_{s}(k+1))v-A^\layer_s(k)v=(\rho(A^{\layer}_{s}(k+1))I-A_s^\layer(k))v
\end{align}
is non-positive.  
Since all eigenvalues of $A^{\layer}_{s}(k)$ less than $\rho(A^{\layer}_{s}(k+1))$, then $\rho(A^{\layer}_{s}(k+1))I-A^\layer_s(k)$ is a regular matrix, and  which inverse matrix can be expanded as
\begin{align}
  (\rho(A^{\layer}_{s}(k+1))I-A^\layer_s(k))^{-1}=\frac{1}{\rho(A^{\layer}_{s}(k+1))}\sum^{\infty}_{q=0}\left(\frac{A^\layer_s(k)}{\rho(A^{\layer}_{s}(k+1))}\right)^q.
\end{align}
Therefore, $(\rho(A^{\layer}_{s}(k+1))I-A^\layer_s(k))^{-1}$ is positive matrix.
Hence, $v$ is non-positive from~\eqref{non-negative}. Since $v$ is also a non-negative vector, $v = 0$. This contradicts to $v$ is an eigenvector. Thus, we conclude that $\rho(A^{\layer}_{s}(k))\ge\rho(A^{\layer}_{s}(k+1))$.
\end{proof}

The next is the same as Theorem~\ref{thm:deltaISS}.
\begin{theorem}[$\delta$ISS of LSTM]\label{thm:deltaISS-again}
Let $L \in \mathbb{N}$ and $k \in \mathbb{Z}_{\geq 0}$. We assume that $s^\layer(0)\in\mathcal{S}^\layer(k)$ for all $l \in [L]$. The LSTM is $\delta$ISS if $\rho(A^{\layer}_{s}(k)) <1$ for all $l \in [L]$, where
\begin{align}
A_s^\layer(k)
&=\begin{bmatrix}
\overline{\sigma}_f^\layer(k) & \alpha_s^\layer(k)\\
\overline{\sigma}_o^\layer(k)\overline{\sigma}_f^\layer(k) & \alpha_s^\layer(k)\overline{\sigma}_o^\layer(k)+\dfrac{1}{4}\phi(\overline{c}^\layer(k))\|U_o^\layer\|
\end{bmatrix},\\
\alpha_s^\layer(k)
&=\frac{1}{4}\|U_f^\layer\|\overline{c}^\layer(k)+\overline{\sigma}_i^\layer(k)\|U_c^\layer\|+\frac{1}{4}\|U_i^\layer\|\overline{\phi}_c^\layer(k).
\end{align}
Furthermore, $\rho(A^{\layer}_{s}(k))$ is monotonically decreasing with respect to $k \in \mathbb{Z}_{\geq 0}$.
\end{theorem}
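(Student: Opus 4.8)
The plan is to replay the ISS argument of Theorem~\ref{thm:ISS}, but applied to the \emph{difference} of two trajectories instead of to a single one. Fix $k$, take initial states $s_1(0),s_2(0)\in\prod_{l=1}^L\mathcal{S}^\layer(k)$ and external inputs $x_1,x_2$, and abbreviate the pre-activations by $z^\layer_{*,j}(t)=W^\layer_* x^\layer_j(t)+U^\layer_* h^\layer_j(t)+b^\layer_*$ for $j=1,2$ and $*\in\{f,i,c,o\}$. The first and central step is to prove the componentwise one-step majorization
\begin{equation}\label{eq:plan-onestep}
\begin{pmatrix}\Enorm{c_1^\layer(t+1)-c_2^\layer(t+1)}\\[2pt] \Enorm{h_1^\layer(t+1)-h_2^\layer(t+1)}\end{pmatrix}
\le A_s^\layer(k)\begin{pmatrix}\Enorm{c_1^\layer(t)-c_2^\layer(t)}\\[2pt] \Enorm{h_1^\layer(t)-h_2^\layer(t)}\end{pmatrix}+a_x^\layer(k)\,\Enorm{x_1^\layer(t)-x_2^\layer(t)},
\end{equation}
with $A_s^\layer(k)$ as in the statement and $a_x^\layer(k)$ as in Proposition~\ref{cor:deltaISS}; this is valid for all $t$ because $s^\layer_1(t),s^\layer_2(t)\in\mathcal{S}^\layer(k)$ by Proposition~\ref{prop:invariant}.

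To obtain \eqref{eq:plan-onestep} I would split each gate product by the ``add and subtract'' identity, e.g.
\begin{equation}
\sigma(z^\layer_{f,1})\odot c_1^\layer(t)-\sigma(z^\layer_{f,2})\odot c_2^\layer(t)=\sigma(z^\layer_{f,1})\odot(c_1^\layer(t)-c_2^\layer(t))+\bigl(\sigma(z^\layer_{f,1})-\sigma(z^\layer_{f,2})\bigr)\odot c_2^\layer(t),
\end{equation}
and likewise for the $i\odot c$ term and for $h^\layer$. Each ``state-difference'' factor is controlled by the invariant-set gate bounds $\sigma(\cdot)\le\overline{\sigma}^\layer_*(k)$, $\phi(\cdot)\le\overline{\phi}^\layer_c(k)$; each ``gate-difference'' factor is controlled by the Lipschitz constants $1/4$ of $\sigma$ and $1$ of $\phi$ together with $z^\layer_{*,1}-z^\layer_{*,2}=W^\layer_*(x_1-x_2)+U^\layer_*(h_1^\layer(t)-h_2^\layer(t))$, and is multiplied by the invariant-set magnitude bounds $\Enorm{c_2^\layer(t)}_\infty\le\overline{c}^\layer(k)$ and $\Enorm{\phi(c_2^\layer(t+1))}_\infty\le\phi(\overline{c}^\layer(k))$. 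Collecting the $\Enorm{h_1-h_2}$ coefficients reproduces exactly $\alpha_s^\layer(k)=\tfrac14\Enorm{U_f^\layer}\overline{c}^\layer(k)+\overline{\sigma}_i^\layer(k)\Enorm{U_c^\layer}+\tfrac14\Enorm{U_i^\layer}\overline{\phi}_c^\layer(k)$ plus the hidden-gate term $\tfrac14\phi(\overline{c}^\layer(k))\Enorm{U_o^\layer}$, while the $\Enorm{x_1-x_2}$ coefficients are the same expressions with $U$ replaced by $W$, i.e. the entries of $a_x^\layer(k)$.

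Given \eqref{eq:plan-onestep}, the remainder is formally identical to the ISS proof with $\Enorm{x^\layer(t)}$ replaced by $\Enorm{x_1^\layer(t)-x_2^\layer(t)}$ and the bias term dropped: iterate the recursion, bound $\Enorm{A_s^\layer(k)^t}\le\mu^\layer\rho(A_s^\layer(k))^t$ via the triangularization estimate \eqref{mu}, and in the multilayer case unfold the top-layer input differences into lower-layer state and input differences through $x^\layer(t)=h^{(l-1)}(t+1)$. When $\rho(A_s^\layer(k))<1$ for all $l$, the first block of terms is a $\mathcal{KL}$ function of $\Enorm{s_1(0)-s_2(0)}$ and the input block is a $\mathcal{K}_\infty$ function of $\EMnorm{x_1(0:t)-x_2(0:t)}$, which is precisely \eqref{def-delta-ISS}. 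Finally, monotonicity of $\rho(A_s^\layer(k))$ follows exactly as in Theorem~\ref{thm:ISS}: by Lemma~\ref{lemma:decreasing} every entry of $A_s^\layer(k)$ (each built from the nonincreasing quantities $\overline{\sigma}^\layer_*(k)$, $\overline{\phi}_c^\layer(k)$, $\overline{c}^\layer(k)$, and $\phi(\overline{c}^\layer(k))$) is nonincreasing in $k$, so $A_s^\layer(k)-A_s^\layer(k+1)\ge 0$ entrywise, and the Perron--Frobenius comparison argument gives $\rho(A_s^\layer(k))\ge\rho(A_s^\layer(k+1))$.

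The main obstacle is not the one-step bound but the multilayer bookkeeping: the input difference feeding layer $L$ is itself a hidden-state difference of layer $L-1$, so \eqref{eq:plan-onestep} must be cascaded down every layer, producing iterated convolution sums whose combinatorial (binomial) growth in $t$ must be shown to be dominated by the geometric decay $\max_l\rho(A_s^\layer(k))^t<1$. Carrying the factors $\mu^\layer$ and the vector norms $\Enorm{a_x^\layer(k)}$ correctly through this cascade is the delicate part, and is exactly the computation from which the explicit form of $\tilde\beta$ in Proposition~\ref{cor:deltaISS} is read off.
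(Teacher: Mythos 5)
Your proposal is correct and follows essentially the same route as the paper's proof: the add-and-subtract decomposition of the gate products, Lipschitz constants $1/4$ for $\sigma$ and $1$ for $\phi$ combined with the invariant-set bounds from Proposition~\ref{prop:invariant} to obtain the one-step majorization by $A_s^\layer(k)$ and $a_x^\layer(k)$, the iteration and multilayer cascade via $x^\layer(t)=h^{(l-1)}(t+1)$ with the triangularization bound $\Enorm{A_s^\layer(k)^t}\le\mu^\layer\rho(A_s^\layer(k))^t$, and the Perron--Frobenius comparison for monotonicity of $\rho(A_s^\layer(k))$. No gaps beyond the bookkeeping you already flag, which the paper carries out in the proof of Theorem~\ref{thm:ISS} and reuses here.
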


\begin{proof}

From \eqref{c^l-LSTM} and \eqref{h^l-LSTM}, we have
\begin{align}
    &c_1^\layer(t+1)-c_2^\layer(t+1)\\
    &= \sigma\left(\WUbone[\layer]{t}{f}\right) \odot \left(c_1^\layer(t)-c_2^\layer(t)\right)\\
    &\quad +\left\{\sigma\left(\WUbone[\layer]{t}{f}\right)-\sigma\left(\WUbtwo[\layer]{t}{f}\right)\right\} \odot c_2^\layer(t)\\
    &\quad +\sigma\left(\WUbone[\layer]{t}{i}\right) \\
    &\qquad \odot \left\{\phi\left(\WUbone[\layer]{t}{c}\right)-\phi\left(\WUbtwo[\layer]{t}{c}\right)\right\}\\
    &\quad +\left\{\sigma\left(\WUbone[\layer]{t}{i}\right)-\sigma\left(\WUbtwo[\layer]{t}{i}\right)\right\} \\
    &\qquad \odot \phi\left(\WUbtwo[\layer]{t}{c}\right),
\end{align}
and
\begin{align}
    &h_1^\layer(t+1)-h_2^\layer(t+1) \\
    &= \sigma\left(\WUbone[\layer]{t}{o}\right) \odot \left\{\phi\left(c_1^\layer(t+1)\right)-\phi\left(c_2^\layer(t+1)\right)\right\}\\
    &\quad + \left\{\sigma\left(\WUbone[\layer]{t}{o}\right)-\sigma\left(\WUbtwo[\layer]{t}{o}\right)\right\} \odot \phi\left(c_2^\layer(t+1)\right).
\end{align}
Since the Lipschitz constant of $\sigma$ is $1/4$, we obtain that
\begin{align}
    &\Enorm{c_1^\layer(t+1)-c_2^\layer(t+1)}\\
    &\leq \Mnorm{\sigma\left(\WUbone[\layer]{t}{f}\right)}\Enorm{c_1^\layer(t)-c_2^\layer(t)} \\
    &\quad +\frac{1}{4}\Enorm{\left(\WUbone[\layer]{t}{f}\right)-\left(\WUbtwo[\layer]{t}{f}\right)}\Mnorm{c_2^\layer(t)}\\
    &\quad +\Mnorm{\sigma\left(\WUbone[\layer]{t}{i}\right)}\Enorm{\left(\WUbone[\layer]{t}{c}\right)-\left(\WUbtwo[\layer]{t}{c}\right)}\\
    &\quad +\frac{1}{4}\Enorm{\left(\WUbone[\layer]{t}{i}\right)-\left(\WUbtwo[\layer]{t}{i}\right)}\Mnorm{\phi\left(\WUbtwo[\layer]{t}{c}\right)} \\
    &\leq \overline{\sigma}_f^\layer(k) \Enorm{c_1^\layer(t)-c_2^\layer(t)} + \frac{1}{4}\left(\Enorm{W_f^\layer}\Enorm{x_1^\layer(t)-x_2^\layer(t)}+\Enorm{U_f^\layer}\Enorm{h_1^\layer(t)-h_2^\layer(t)}\right)\overline{c}^\layer(k) \\
    &\quad +\overline{\sigma}_i^\layer(k) \left(\Enorm{W_c^\layer}\Enorm{x_1^\layer(t)-x_2^\layer(t)}+\Enorm{U_c^\layer}\Enorm{h_1^\layer(t)-h_2^\layer(t)}\right) \\
    &\quad +\frac{1}{4}\overline{\phi}_c^\layer(k) \left(\Enorm{W_i^\layer}\Enorm{x_1^\layer(t)-x_2^\layer(t)}+\Enorm{U_i^\layer}\Enorm{h_1^\layer(t)-h_2^\layer(t)}\right),
\end{align}
and 
\begin{align}
    &\Enorm{h_1^\layer(t+1)-h_2^\layer(t+1)}\\
    &\leq \overline{\sigma}_o^\layer(k) \Enorm{c_1^\layer(t+1)-c_2^\layer(t+1)}+\frac{1}{4}\phi\left(\overline{c}^\layer(k)\right) \left(\Enorm{W_o^\layer}\Enorm{x_1^\layer(t)-x_2^\layer(t)}+\Enorm{U_o^\layer}\Enorm{h_1^\layer(t)-h_2^\layer(t)}\right) \\
    &\leq \overline{\sigma}_o^\layer(k) \Big\{\overline{\sigma}_f^\layer(k) \Enorm{c_1^\layer(t)-c_2^\layer(t)} + \frac{1}{4}\left(\Enorm{W_f^\layer}\Enorm{x_1^\layer(t)-x_2^\layer(t)}+\Enorm{U_f^\layer}\Enorm{h_1^\layer(t)-h_2^\layer(t)}\right)\overline{c}^\layer(k) \\
    &\hspace{60pt} +\overline{\sigma}_i^\layer(k) \left(\Enorm{W_c^\layer}\Enorm{x_1^\layer(t)-x_2^\layer(t)}+\Enorm{U_c^\layer}\Enorm{h_1^\layer(t)-h_2^\layer(t)}\right) \\
    &\hspace{60pt} +\frac{1}{4}\overline{\phi}_c^\layer(k) \left(\Enorm{W_i^\layer}\Enorm{x_1^\layer(t)-x_2^\layer(t)}+\Enorm{U_i^\layer}\Enorm{h_1^\layer(t)-h_2^\layer(t)}\right)\Big\}\\
    &\quad +\frac{1}{4}\phi\left(\overline{c}^\layer(k)\right)\left(\Enorm{W_o^\layer}\Enorm{x_1^\layer(t)-x_2^\layer(t)}+\Enorm{U_o^\layer}\Enorm{h_1^\layer(t)-h_2^\layer(t)}\right).
\end{align}
Then, 
\begin{align}
    \begin{pmatrix}
        \Enorm{c_1^\layer(t+1)-c_2^\layer(t+1)} \\ \Enorm{h_1^\layer(t+1)-h_2^\layer(t+1)}
    \end{pmatrix}
    &\leq
    \begin{bmatrix}
        \overline{\sigma}_f^\layer(k) & \alpha_s^\layer(k) \\
        \overline{\sigma}_o^\layer(k) \overline{\sigma}_f^\layer & \alpha_s^\layer(k) \overline{\sigma}_o^\layer(k)+\frac{1}{4}\phi\left(\overline{c}^\layer(k)\right)\Enorm{U_o^\layer}
    \end{bmatrix}
    \begin{pmatrix}
        \Enorm{c_1^\layer(t)-c_2^\layer(t)} \\ \Enorm{h_1^\layer(t)-h_2^\layer(t)}
    \end{pmatrix}\\
    &\quad +
    \begin{pmatrix}
        \alpha_x^\layer(k) \\ \alpha_x^\layer(k) \overline{\sigma}_o^\layer(k)+\frac{1}{4}\phi\left(\overline{c}^\layer(k)\right)\Enorm{W_o^\layer}
    \end{pmatrix}
    \Enorm{x_1^\layer(t)-x_2^\layer(t)}\\
    &\eqqcolon A_s^\layer(k) 
    \begin{pmatrix}
        \Enorm{c_1^\layer(t)-c_2^\layer(t)} \\ \Enorm{h_1^\layer(t)-h_2^\layer(t)}
    \end{pmatrix}
    + a_x^\layer(k) \Enorm{x_1^\layer(t)-x_2^\layer(t)},
\end{align}
where $\alpha_s^\layer(k)=\frac{1}{4}\overline{c}^\layer(k)\|U_f^\layer\|+\overline{\sigma}_i^\layer(k)\|U_c^\layer\|+\frac{1}{4}\overline{\phi}_c^\layer(k)\|U_i^\layer\|$ and $\alpha_x^\layer(k)=\frac{1}{4}\overline{c}^\layer(k)\|W_f^\layer\|+\overline{\sigma}_i^\layer(k)\|W_c^\layer\|+\frac{1}{4}\overline{\phi}_c^\layer(k)\|W_i^\layer\|$.

Similar to the proof of Theorem~\ref{thm:ISS}, we have
\begin{align}
    \Enorm{s_1(t)-s_2(t)}
    &\leq \mu \lambda_1(A_s(k))^t \Enorm{s_1(0)-s_2(0)}
    +\Enorm{(I-A_s(k)^t)(I-A_s(k))^{-1}a_x(k)}\EMnorm{x_1-x_2} \label{s-LSTM_dISS-upper}
\end{align}
for single-layer LSTM, and
\begin{align}
    \Enorm{s_1^\Layer(t)-s_2^\Layer(t)}
    &\leq \mu^\Layer \rho(A_s^\Layer(k))^t \Enorm{s_1^\Layer(0)-s_2^\Layer(0)}\\
    &\quad +\sum_{m=1}^{L-1} \mu^{(m:L)} \frac{(t+L-m)!}{t!(L-m)!} \max_{l=m,\ldots,L} \rho(A_s^\layer(k))^t \left(\prod_{l=m+1}^L \Enorm{a_x^\layer(k)}\right) \Enorm{s_1^{(m)}(0)-s_2^{(m)}(0)} \\
    &\quad + \sum_{\tau=0}^{t-1} \mu^{(1:L)} \frac{(t+L-\tau-2)!}{(t-\tau-1)!(L-1)!} \max_{l=1,\ldots,L} \rho(A_s^\layer(k))^{t-\tau-1}\left(\prod_{l=1}^L \Enorm{a_x^\layer(k)}\right)\Enorm{x_1(\tau)-x_2(\tau)} \label{delta-ISS-explicit}
\end{align}
for multi-layer LSTM.
Thus, LSTM is $\delta$ISS if $\rho(A_s^\layer(k))<1$ for all $l \in [L]$.

The proof of the monotonically decreasing of $\rho(A^{\layer}_{s}(k))$ is the same as that of Theorem~\ref{thm:ISS}.
\end{proof}

\subsection{Proof of Theorem \ref{thm: UpperBoundRT}}

\begin{theorem}[Upper Bound of Recovery Time] \label{thm: UpperBoundRT-again}
We assume that $s^\layer(0)\in\mathcal{S}^\layer(k)$ and $\rho(A^{\layer}_{s}(k)) <1$ for all $l \in [L]$. Let infinite length inputs $x(t), \widehat{x}(t) \in  [-x_\rmmax,x_\rmmax]^{n_x}, t\ge0$ and $e > 0$. We assume that there exists $t_0$ such that $x(t)=\widehat{x}(t)$ for any $t \ge t_0$. Then, we have
\begin{align}
    &\sup_{s(0) \in \prod^L_{l=1}S^\layer_k}T_R(x, \widehat{x},s(0);e) \notag 
    \le  \min \{t \ge 0 : \tilde{\beta}(2\sqrt{n_c}\bar{s}(k),t'; k) \le e/\|U_y\| \ \text{for any} \ t' \ge t \},
\end{align}
where $\bar{s}(k)=(\bar{s}^{(1)}(k),\dots,\bar{s}^{(L)}(k))$ and $
\bar{s}^{(l)}(k)=\|(\bar{c}^{(l)}(k) , \phi (\bar{c}^{(l)}(k)) \bar{\sigma}^{(l)}_o (k))\|$.
\end{theorem}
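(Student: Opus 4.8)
The plan is to bound the output discrepancy by the explicit $\delta$ISS estimate $\tilde\beta$ of Proposition~\ref{cor:deltaISS}, using time-invariance of the recursion to shift the reference time to $t_0$ and the invariance of $\mathcal{S}^\layer(k)$ to control the state gap at $t_0$. I first pass from outputs to the last-layer state: since $y(t)=U_y h^\Layer(t)+b_y$, the bias cancels in the difference and
\[
\|y(t,s(0),x(0:t))-y(t,s(0),\widehat{x}(0:t))\|\le \|U_y\|\,\|s^\Layer(t,s(0),x(0:t))-s^\Layer(t,s(0),\widehat{x}(0:t))\|,
\]
because $h^\Layer$ is a subvector of $s^\Layer=(c^\Layer,h^\Layer)$.

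Next I exploit the hypothesis $x(t)=\widehat{x}(t)$ for $t\ge t_0$. Writing $s_1(t_0)$ and $s_2(t_0)$ for the states reached at time $t_0$ under $x$ and $\widehat{x}$, the two trajectories satisfy the same recursion driven by identical inputs on $[t_0,\infty)$. Thus on this window we are exactly in the situation of Proposition~\ref{cor:deltaISS}---same input, differing initial data---so shifting the reference time to $t_0$ (the input-difference contribution is absent since the inputs now coincide) gives, for all $t\ge t_0$,
\[
\|s^\Layer(t,s(0),x)-s^\Layer(t,s(0),\widehat{x})\|\le \tilde\beta\bigl(\|s_1^{(1)}(t_0)-s_2^{(1)}(t_0)\|,\dots,\|s_1^{(L)}(t_0)-s_2^{(L)}(t_0)\|,\,t-t_0;\,k\bigr).
\]

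I then bound each layerwise gap by the diameter of the invariant set. Since $s(0)\in\prod_l\mathcal{S}^\layer(k)$, Proposition~\ref{prop:invariant} keeps both $s_1(t_0)$ and $s_2(t_0)$ in $\mathcal{S}^\layer(k)=\mathcal{C}^\layer(k)\times\mathcal{H}^\layer(k)$. The boxes $\mathcal{C}^\layer(k)$ and $\mathcal{H}^\layer(k)$ have Euclidean diameters $2\sqrt{n_c}\,\overline{c}^\layer(k)$ and $2\sqrt{n_c}\,\phi(\overline{c}^\layer(k))\overline{\sigma}_o^\layer(k)$, whence
\[
\|s_1^\layer(t_0)-s_2^\layer(t_0)\|\le 2\sqrt{n_c}\,\bigl\|(\overline{c}^\layer(k),\phi(\overline{c}^\layer(k))\overline{\sigma}_o^\layer(k))\bigr\|=2\sqrt{n_c}\,\overline{s}^\layer(k).
\]
Because $\tilde\beta$ depends on its state arguments through non-negative coefficients, it is non-decreasing in each of them, so substituting these bounds yields $\|y(t,\cdot,x)-y(t,\cdot,\widehat{x})\|\le\|U_y\|\,\tilde\beta(2\sqrt{n_c}\,\overline{s}(k),t-t_0;k)$ for every $t\ge t_0$, uniformly in $s(0)$.

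Finally I match this with Definition~\ref{def:recovery time}. Letting $T^\ast$ denote the right-hand side of the claim, any $t'\ge t_0+T^\ast$ satisfies $t'-t_0\ge T^\ast$, hence $\tilde\beta(2\sqrt{n_c}\,\overline{s}(k),t'-t_0;k)\le e/\|U_y\|$ and so the output gap at $t'$ is at most $e$; therefore $t_0+T^\ast$ belongs to the defining set of $T_R$, giving $T_R(x,\widehat{x},s(0);e)\le T^\ast$ for every admissible $s(0)$ and hence the supremum bound. The step I expect to require the most care is the time-shift: verifying that Proposition~\ref{cor:deltaISS}, stated with reference time $0$, transfers verbatim to the window $[t_0,\infty)$ with $s_i(t_0)$ as initial data---this rests on the autonomy of $s(t+1)=f(s(t),x(t))$ and on the invariance of $\mathcal{S}^\layer(k)$ ensuring $s_i(t_0)\in\prod_l\mathcal{S}^\layer(k)$---together with checking the monotonicity of $\tilde\beta$ in its layerwise arguments exactly as used above.
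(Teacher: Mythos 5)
Your proof is correct and follows essentially the same route as the paper's: reduce the output gap to the last-layer state gap via $\|U_y\|$, apply Proposition~\ref{cor:deltaISS} with the reference time shifted to $t_0$ (where the inputs coincide), bound the layerwise state gaps at $t_0$ by the invariant-set diameter $2\sqrt{n_c}\,\overline{s}^\layer(k)$ using Proposition~\ref{prop:invariant}, invoke monotonicity of $\tilde\beta$ in its state arguments, and read off the bound from Definition~\ref{def:recovery time}. The time-shift step you flag as delicate is handled in the paper exactly as you describe, via the autonomy of the recursion and the invariance of $\mathcal{S}^\layer(k)$.
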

\begin{proof}
Denote that
\begin{align}
    s_1^{(l)} = 
    \begin{pmatrix}
        c_1^{(i)} \\ h_1^{(l)}
    \end{pmatrix}
    = s^{(l)}(t_0, s(0), x(0:t_0)), \\
    s_2^{(l)} = 
    \begin{pmatrix}
        c_2^{(l)} \\ h_2^{(l)}
    \end{pmatrix}
    = s^{(l)}(t_0, s(0), \widehat{x}(0:t_0))
\end{align}
for $l = 1, \dots, L$.
Proposition~\ref{cor:deltaISS} implies
that there exists
$\tilde{\beta}$ such that
\begin{align}
\MoveEqLeft
\|s^{(L)}(t - t_0, s_1(t_0), x(t_0 : t)) - s^{(L)}(t - t_0, s_2(t_0), x(t_0 : t))\| 
\leq \tilde{\beta}(\lVert s_1^{(1)} - s_2^{(1)} \rVert, \dots, \lVert s_1^{(L)} - s_2^{(L)} \rVert, t - t_0; k)
\end{align}
and $\tilde{\beta}(s^{(1)}, \dots, s^{(L)}, t; k)$ is increasing with respect to $s^{(1)}, \dots, s^{(L)}$
for any $t$ and $k$. 
On the other hand, $s_1^{(l)}, s_2^{(l)} \in \mathcal{S}^{(l)}(k)$ implies
\begin{align}
    \lVert c_1^{(l)} - c_2^{(l)} \rVert
    &\le \sqrt{n_c} 
        \lVert c_1^{(l)} - c_2^{(l)} \rVert _\infty
    \le 2 \sqrt{n_c} \bar{c}^{(l)}(k), \\
    \lVert h_1^{(l)} - h_2^{(l)} \rVert
    &\le \sqrt{n_c} 
        \lVert h_1^{(l)} - h_2^{(l)} \rVert _\infty
    \le 2 \sqrt{n_c}
        \phi(\bar{c}^{(l)}(k)) \bar{\sigma}_o^{(l)}(k),
\end{align}
and then
\begin{equation}
    \lVert s_1^{(l)} - s_2^{(l)} \rVert
    = \sqrt{
        \lVert c_1^{(l)} - c_2^{(l)} \rVert^2
        + \lVert h_1^{(l)} - h_2^{(l)} \rVert^2
    }
    \le 2 \sqrt{n_c} \bar{s}^{(l)}(k).
\end{equation}
Thus, we have
\begin{align}
\|y(t, s(0), x(0:t) - y(t, s(0), \widehat{x}(0:t)\| &\leq \|U_y\| \|s^{(L)}(t, s(0), x(0:t)) - s^{(L)}(t, s(0), \widehat{x}(0:t))\| \\
&= \|U_y\| \|s^{(L)}(t - t_0, s_1, x(t_0 : t)) - s^{(L)}(t - t_0, s_2, x(t_0 : t))\| \\
&\leq \lVert U_y \rVert \tilde{\beta}(
    2 \sqrt{n_c} \bar{s}(k), t - t_0; k
).
\end{align}
The theorem follows immediately from this inequality.
\end{proof}

\section{Details of Experimential Setup}\label{appendix: experiment set up}
This section provides a detailed description of the experimental settings used in the experiments discussed in the main text. The experiments were conducted using the PyTorch framework.  The random seed was set using \texttt{torch.manual\_seed(1234)}.

\subsection{Simplified Model}

\subsubsection{LSTM Setup} 

The model dimensions are provided in Table~\ref{table:model summary simlified lstm}. Twenty models were generated with weights sampled uniformly from the intervals $W_* \in [0,0.1]$, $U_* \in [0, 1.0]$, and $C \in [0,1.0]$.
\begin{table}[H]
    \centering
    \caption{Model dimensions of simplified LSTM}
    \begin{tabular}{cccc}
    $L$ & $n_x$ & $n_c$ & $n_y$ \\
    \midrule
    1 & 1 & 1 & 1  \\
    \end{tabular}
    \label{table:model summary simlified lstm}
\end{table}

\subsubsection{Evaluation}
We define the nominal input data $x$ and the perturbed input data $\widehat{x}$ that satisfy the assumptions of Theorem 4.3 with $t_0=20$ as follows.
\begin{align*}
    x(t) = 0.5 , \ \widehat{x}(t) =  \begin{cases} 
1.0 & \text{if } t = 20, \\
0.5 & \text{else},
\end{cases}
\end{align*}
where $t=0,1,2,\ldots,99$. $T_R(x,\widehat{x})$ and $\bar{T}_R(k)$ were calculated according to Algorithm~\ref{alg: calc recovery speed} and Algorithm~\ref{alg: calc recovery speed estimation} where $\mathcal{T} =100$. It should be noted that in cases where the model exhibits instability or $\rho(A_s(k)) \ge 1$, this algorithm computes $T_R = \infty$ or $\bar{T}_R(k) = \infty$. Calculating statistical measures such as estimation errors or correlation coefficients using infinite values would result in arbitrary and meaningless numbers. Indeed, for some values of $k$ and specific models, our experiments showed that $\bar{T}_R(k) = \infty$ and $\rho(A_s(k)) \ge 1$ occurred. This result is theoretically correct and does not indicate any deficiency in the theory. However, using infinite values to calculate statistical measures such as estimation errors or correlation coefficients would result in arbitrary and meaningless numbers. To address this issue, we replaced infinity values with finite proxies:  $\bar{T}_R(k)=\mathcal{T}=100$ when computing these statistical measures.

\begin{algorithm}[H]
    \caption{Computation of $T_R(x,\widehat{x})$}
    \label{alg: calc recovery speed}
    \begin{algorithmic}[1]
    \STATE $T \gets \textrm{length}(x)$
    \STATE $d \gets \{|y(t,s(0),x(0:t))-y(t,s(0),\widehat{x}(0:t))|\}_{t=0,1,\ldots,T}$
    \IF{$d[-0] < e$}
        \STATE $T_R \gets T-T_0$
    \ELSE 
        \RETURN $\infty$
    \ENDIF
    \FOR{$t = 1$ \textbf{to} $T - t_0$} 
        \IF{$d[-t] < e$}
            \STATE $T_R \gets T_R-1$
        \ELSE 
            \RETURN $T_R$ - $t_0$
        \ENDIF
    \ENDFOR
    \RETURN 0
    \end{algorithmic}
\end{algorithm}

\begin{algorithm}[h]
    \caption{Computation of $\bar{T}_R(k)$}
    \label{alg: calc recovery speed estimation}
    \begin{algorithmic}[1]
    \STATE \textbf{Inputs}: $\mathcal{T}$
    \STATE $d \gets \{\tilde{\beta}(2\sqrt{n_c}\bar{s}(k),t;k)\}_{t=0,\ldots,T}$
    \IF{$d[-0] < e$}
        \STATE $\bar{T}_R \gets \mathcal{T}$
    \ELSE 
        \RETURN $\infty$
    \ENDIF
    \FOR{$t = 1$ \textbf{to} $\mathcal{T}$} 
        \IF{$d[-t] < e$}
            \STATE $\bar{T}_R \gets \bar{T}_R-1$
        \ELSE 
            \RETURN $\bar{T}_R$
        \ENDIF
    \ENDFOR
    \RETURN 0
    \end{algorithmic}
\end{algorithm}

\subsection{Two Tank System}\label{subsec:setting_two_tank}

\subsubsection{LSTM Setup}
The model dimensions are provided in Table~\ref{table:model summary two tank lstm}. 
\begin{table}[H]
    \centering
    \caption{Model dimensions of the two-tank system LSTM}
    \begin{tabular}{cccc}
    $L$ & $n_x$ & $n_c$ & $n_y$ \\
    \midrule
    1 & 3 & 22 & 2  \\
    \end{tabular}
    \label{table:model summary two tank lstm}
\end{table}

\subsubsection{Data Generation}
The training and test data were randomly generated through the following process~\citep{Jung2023,Schoukens2017}.
First, the continuous two-tank system
\begin{align}
    \frac{dh_1}{dt}&= -p_1\sqrt{2gh_2}+p_2u,\\
    \frac{dh_2}{dt}&= p_3\sqrt{2gh_1}-p_4\sqrt{2gh_2},
\end{align}
was discretized using the Runge-Kutta method with $dt = 0.01$. The system parameters are shown in Table~\ref{table:system parameters}.  We randomly generated a sequence of control inputs $\{u(t)\}_t$, set the initial state of the system to $(h_1(0), h_2(0)) = (0,0)$, and obtained the system states $\{(h_1(t),h_2(t))\}_t$. Here, the generation of control inputs was set to a time series length of $100,000$, with values randomly switched every $4,000$ step following a uniform distribution $\mathcal{U}(0.5, 3.0)$. Using this method, we created two independent control input sequences, one for training and one for testing. 

As the final stage of the data generation process, we created training and test datasets. In this step, we simulated more realistic scenarios by adding noise to the previously generated state variables. Specifically, we constructed the input data as follows:
\begin{align*}
    x(t) = (u(t), h_1(t) + w_1(t), h_2(t)+w_2(t)). 
\end{align*}
This addition of noise simulates real-world factors such as sensor measurement errors and environmental uncertainties. The noise distribution was set to $w_1,w_2 \sim \mathcal{N}(0,{0.1}^2)$ for the training data and to $w_1,w_2 \sim \mathcal{N}(0,{0.01}^2)$ for the test data. This is because while noise with a standard deviation of about $0.01$ is expected under operational conditions, intentionally introducing larger noise during the training phase aims to improve the model's robustness and accuracy. On the other hand, the inference target data was set using the state of the next time step as follows:
\begin{align*}
    y(t) = (h_1(t+1), h_2(t+1)).
\end{align*}
This setup requires the model to predict the state of the next time step from the current control input and the state including noise. This simulates the important task of one-step-ahead state prediction in actual system control.

The time series data are normalized to the interval $[-1, 1]$ for both input $x(t)=(u(t),h_1(t)+w(t), h_2(t)+w(t))$ and output $y(t)=(h_1(t+1),h_2(t+1))$ of the LSTM before conducting inference. The equation for normalization is given below:
\begin{align*}
    x &= 2 (x-x_\rmmin)/(x_\rmmax-x_\rmmin) -1, \\
    y &= 2 (y-y_\rmmin)/(y_\rmmax-y_\rmmin) -1,
\end{align*}
where $x_\rmmin = (0.5, 0,0)$, $x_\rmmax = (3.0, 10,10)$, $y_\rmmin = (0,0)$, and $y_\rmmax = (10,10)$.
\begin{table}[H]
    \centering
    \caption{System parameters}
    \begin{tabular}{@{}cccccc@{}}
    \toprule
    \textbf{System constants} & $p_1$ & $p_2$ & $p_3$ & $p_4$ & $g$\\
                              & 0.5   & 0.5   & 0.5   & 0.5  & 0.5  \\ \midrule
    \textbf{Actuator}         & $u_{\min}$ & $u_{\max}$  \\ 
                              & 0.5 & 3.0 \\ \bottomrule
    \label{table:system parameters}
    \end{tabular}
\end{table}

\subsubsection{Train}

\paragraph{Data set}

The training dataset is split into training and validation sets, with the first $50\%$ of the sequences allocated for training and the remaining $50\%$ used for validation purposes. To reduce computational load and augment data, the training dataset is created by segmenting one length time series $50,000$ into individual sequences $49,000$, each with a length of $1,000$, using a sliding window of step size $1$. On the other hand, as data augmentation is not required for the validation dataset, we create the dataset using both a window length and a step size of 1,000, solely considering computational efficiency.

\paragraph{Loss}

We design the loss function for backward as follows:
\begin{align*}
    \mathcal{L} = \text{MAE}(y_{\textrm{true}}, y_{\textrm{pred}}) + \lambda \Phi(\theta,\varepsilon).
\end{align*}
When calculating the MAE on the batched data, we discard the first 10 steps of the time series predictions. This adjustment accounts for the inherent instability of LSTM predictions during the initial time steps, which is a characteristic behavior of LSTM networks.

\paragraph{Hyperparameters}

We used the following hyperparameters. The weights were initialized with uniform distribution $\mathcal{U}(-1/\sqrt{n_c},1/\sqrt{n_c})$ and optimized using the Adam ($\beta =(0.9, 0.999)$, $\varepsilon=10^{-8}$) optimizer. The mini-batch size was $64$, the epoch size was $200$, the learning rate was $0.001$ with ReduceLROnPlateau ($\text{mode}=\text{min}$, $\text{factor}=0.1$, $\text{patience}=10$) scheduler. The baseline model was constructed with $\lambda = 0$, while the stability-guaranteed model was implemented with $\lambda = 1.0$. The recovery time control parameter $\varepsilon$ was selected from the set $\{0.1,0.15,0.2,0.25,0.3,0.35,0.4,0.45, 0.5 \}$.

\subsubsection{Evaluation}
The inference accuracy was evaluated using MAE as the metric for the test data. For the time series, we calculated the MAE, ignoring the first $10$ steps and using non-normalized values for the evaluation. 

Separate from the data used for evaluating inference accuracy, we generate the nominal input data $x$ and the perturbed input data $\widehat{x}$ for the estimation of recovery time $T_R$ using the following steps. First, we set the time series length to $3500$ steps, the control input to $u = 1.0$, and the initial water level of the system to $(h_1(0),h_2(0)) = (0,0)$. Then, following the data generation process, we created $(h_1(t), h_2(t))$ and set $x(t) = (u(t),h_1(t), h_2(t))$. Then, we define $\widehat{x}$ that satisfies the assumptions of Theorem~\ref{thm: UpperBoundRT} with $t_0=1010$  as
\begin{align*}
\widehat{x}(t) =  \begin{cases} 
(u(t), h_1(t)+ 1.0, h_2(t) +1.0) & \text{if } 1000 \le t \le 1010, \\
x(t) & \text{else}
.\end{cases}
\end{align*}
This definition models the phenomenon where observational noise temporarily has a magnitude larger than expected. Based on these settings, we calculate $T_R$ and $\bar{T}_R$. The algorithm is similar to the one used with the toy model, but there is one point to note due to the effect of data normalization. As the LSTM output is normalized, $T_R$ is calculated to ensure the error remains within e on the original scale. Consequently, $\bar{T}_R$ must also be adjusted to the original scale. In the calculation of $\bar{T}_R$, we apply the following scale transformation to $e$ in Theorem~\ref{thm: UpperBoundRT}:
\begin{align*}
    e \gets e/(h_\rmmax - h_\rmmin) = e/(10-0) =e/10.
\end{align*}

\section{Theoretical Aproaches Compare to Data-Driven Aproaches}\label{adversal_training}
We conducted comparative experiments between our proposed theoretical method and conventional data-driven approaches.
In the data-driven approach, training was performed by randomly adding pulses along the time series direction to the training data.

\subsection{Setting}

\subsubsection{TRAIN}
\paragraph{Data set}

For the adversarial learning, training and validation data were generated by adding pulse signals to the input data generated in Section~\ref{subsec:setting_two_tank}, excluding control inputs. For data with time series length $T=50,000$, we generated randomly occurring sustained pulses. Each pulse occurrence follows a Poisson process with an average rate of $\lambda = 0.001$ per unit time. The total number of pulses over the entire period $T$ is determined according to the Poisson distribution Poisson$(\lambda T)$.Each pulse has the following characteristics:
\begin{itemize}
    \item The start time is randomly selected from a uniform distribution in the range $[0, T-d]$, where $d=10$ is the duration of each pulse.
    \item The amplitude is sampled from a uniform distribution within $[0, M]$, where $M$ is the maximum intensity of the pulse.
    \item Each pulse is added to the input signal at a constant value for $d$ steps from the selected start time.
\end{itemize}
We created training and validation data with variations of $M = 1, 3, 10$ using the method described above. Models trained with each of these datasets are referred to as "DD pulse $M$ model" (where DD stands for data-driven), while our model was trained using data without added pulses.

\paragraph{Loss}

We used the same loss function as defined in Section~\ref{subsec:setting_two_tank}.

\paragraph{Hyperparameters}

With the exception of the following two points, we used the same hyperparameters as in Section \ref{subsec:setting_two_tank}:
\begin{itemize}
    \item For our model, $\varepsilon=0.05$.
    \item For the DD pulse $M$ models, $\lambda=0$.
\end{itemize}

\subsubsection{EVALUATION}

We evaluated each model using the following evaluation data, under the same settings as in Section \ref{subsec:setting_two_tank}.

We define $\widehat{x}_p$ that satisfies the assumptions of Theorem~\ref{thm: UpperBoundRT} with $t_0=1010$  as
\begin{align*}
\widehat{x}_p(t) =  \begin{cases} 
(u(t), h_1(t)+ p, h_2(t) + p) & \text{if } 1000 \le t \le 1010, \\
x(t) & \text{else},
\end{cases}
\end{align*}
where $p$ was selected from the set $\{1.0, 5.0, 9.0\}$, and let $\widehat{y}_p$ represent the output of the LSTM when $\widehat{x}_p$ is used as the input.

\subsection{Results and Analysis}
As shown in Table~\ref{tab:my_label}, regarding $T_R$, our proposed method demonstrated superiority. Our model achieved faster recovery times compared to models obtained using data-driven methods. Although models trained with data having stronger noise pulses showed improvement in recovery time, they could not reach the performance of our theoretically guaranteed model. These results indicate that our proposed method is effective in enhancing LSTM's recovery capability.

In achieving stability criteria, our model with theoretical guarantees also showed clear advantages. Models trained with these data-driven techniques failed to meet the theoretical stability criterion of $\rho(A_s)<1$. This result suggests that approaches relying solely on data are insufficient for theoretical guarantees of system resilience.

On the other hand, in terms of noise resistance, models trained with data-driven approaches outperformed our model in certain cases. This is likely because our proposed method did not directly aim to maximize noise resistance. Noise resistance can be theoretically bounded using $\gamma$ from Definition~\ref{def:deltaISS}. As a future challenge, by explicitly deriving $\gamma$, there is potential to improve noise resistance metrics.

In general, our theoretical approach has been proven to provide a better solution than data-driven methods to ensure resilience under any noise conditions.
\begin{table}[h]
    \centering
    \begin{tabular}{llllll}
         &Baseline&DD pulse $1$&DD pulse $5$&DD pulse $10$&Our\\
         \hline\\
         MAE& \textbf{0.007}&0.017&0.014&0.012&0.010\\
         \hline\\
    $T_R(x,\widehat{x}_1)$&21.31&17.04&9.31&4.49&\textbf{0.86}\\
         $T_R (x,\widehat{x}_5)$&22.91&19.74&8.11&4.74&\textbf{1.04}\\
         $T_R (x,\widehat{x}_9)$&24.39&17.33&8.11&4.90&\textbf{1.04}\\
         $\overline{T}_R$&$\infty$&$\infty$&$\infty$&$\infty$&\textbf{19.32}\\
         \hline\\
         $\rho(A)$&1.537e+11&6.001e+7&2.627e+12&1.295e+6&\textbf{0.9476}\\
         \hline\\
         $\max_t|y(t)-\widehat{y}_1(t)|$&\textbf{0.03}&0.06&0.16&0.26&0.21\\
         $\max_t|y(t)-\widehat{y}_5(t)|$&0.40&\textbf{0.35}&1.83&3.05&1.30\\
         $\max_t|y(t)-\widehat{y}_9(t)|$&1.23&\textbf{0.82}&3.30&5.88&2.80\\
         \hline
         
    \end{tabular}
    \caption{\textbf{Comparison of stability performance between theoretical approaches and data-driven approaches}. "DD pulse $M$" refers to a model trained with random pulses added to the training data, where $M$ indicates the upper limit of the pulse magnitude. "Our" refers to a model trained using our proposed loss function~\eqref{eq:our loss}. Bold numbers indicate the best performance for each metric.}
    \label{tab:my_label}
\end{table}
\end{document}